\documentclass[letterpaper, 10 pt, conference]{ieeeconf}  

\IEEEoverridecommandlockouts                              
\usepackage{cite}
\usepackage{amsmath,amssymb,amsfonts}
\usepackage{graphicx}
\usepackage{tikz}
\usetikzlibrary{arrows.meta,calc,positioning}
\usepackage{hyperref}
\hypersetup{
  hidelinks,
  pdftitle={Adaptive Prescribed Performance Trajectory Tracking for Uncertain Autonomous Underwater Vehicles With Input Saturation},
  pdfauthor={Athanasios K. Gkesoulis; Christos Vlachos; George K. Fourlas; Charalampos P. Bechlioulis; George C. Karras},
  pdfsubject={Adaptive prescribed performance control of underwater vehicles with input saturation},
  pdfkeywords={Underwater robotics, prescribed performance control, input saturation, trajectory tracking}
}

\newtheorem{theorem}{Theorem}

\newtheorem{assumption}{Assumption}

\newtheorem{remark}{Remark}

\title{\LARGE \bf
Adaptive Prescribed Performance Trajectory Tracking for Uncertain Autonomous Underwater Vehicles 
With Input Saturation
}

\author{Athanasios K. Gkesoulis$^{1}$, Christos Vlachos$^{1,2}$, George K. Fourlas$^{3}$,\\
Charalampos P. Bechlioulis$^{1,2}$, and George C. Karras$^{1,3}$
\thanks{$^{1}$The authors are with the Robotics Institute, Athena Research Center, Marousi, Greece (e-mails: \{\texttt{a.gkesoulis@athenarc.gr}, \texttt{xristos.blaxos3@gmail.com}, \texttt{chmpechl@upatras.gr}, \texttt{gkarras@uth.gr}\}).}
\thanks{$^{2}$The authors are with the Division of Systems and Automatic Control, Department of Electrical and Computer Engineering, University of Patras, Patras, Greece.}
\thanks{$^{3}$The authors are with the Department of Informatics and Telecommunications, School of Sciences, University of Thessaly, Lamia, Greece (e-mail: \texttt{gfourlas@uth.gr}).}}

\makeatletter
\def\ps@preprintfirstpage{
\def\@oddhead{}
\def\@evenhead{}
\def\@oddfoot{\parbox[b]{\textwidth}{\normalfont\footnotesize This work has been submitted to the IEEE for possible publication. Copyright may be transferred without notice, after which this version may no longer be accessible.}}
\let\@evenfoot\@oddfoot
}
\makeatother

\begin{document}

\maketitle
\thispagestyle{preprintfirstpage}
\pagestyle{empty}

\begin{abstract}
Accurate trajectory tracking is essential for autonomous underwater vehicles operating with uncertain hydrodynamics and limited thruster authority. This paper develops a low-complexity adaptive prescribed performance controller for uncertain underwater vehicle dynamics with coupled and saturated inputs. The proposed method adapts the performance functions directly through the mismatch between the requested and saturated control inputs, enlarging the performance funnels only when the available actuation cannot meet the nominal specifications. A coupled invariant-set analysis establishes boundedness of all closed-loop signals, prescribed bounds on the filtered tracking errors, and explicit feasibility conditions. Comparative simulations on a BlueROV2 underwater vehicle show that the adaptive design maintains prescribed performance during a figure-eight maneuver, whereas conventional prescribed performance control loses feasibility under input limits. A real-time pool experiment further demonstrates stable trajectory tracking and the practical implementation of the proposed method on an underwater robotic platform.
\end{abstract}

\section{INTRODUCTION}
\label{sec:introduction}

Autonomous underwater vehicles (AUVs) are increasingly expected to execute inspection, mapping, intervention, and environmental monitoring missions under limited actuation authority. In these operations, trajectory tracking accuracy is a robotic mission requirement: large transient deviations may compromise sensor coverage, vehicle safety, or the quality of collected data. Underwater vehicles also operate with uncertain hydrodynamics, restoring forces, disturbances, and hard thruster limits whose effects propagate through control allocation and coupled body motion. Experimental prescribed performance tracking of underwater robots \cite{BechlioulisKarras2017} and open-water funnel control under input saturation \cite{lapandic2024kinodynamic} reinforce the need to jointly address tracking performance and actuation limits. 

Prescribed performance control (PPC) imposes transient and steady state tracking specifications through designer-selected performance envelopes \cite{4639441}, while funnel control achieves the same following a different design \cite{ilchmann2002tracking}. Barrier-like transformations of normalized errors can enforce convergence rate, overshoot, and residual error requirements without exact model identification. Actuator saturation, however, creates a feasibility conflict whenever a fixed performance envelope demands more control authority than the vehicle can deliver, particularly during aggressive maneuvers or in the presence of strong currents and other disturbances.

\begin{figure}[!t]
\centering
\includegraphics[width=0.70\columnwidth]{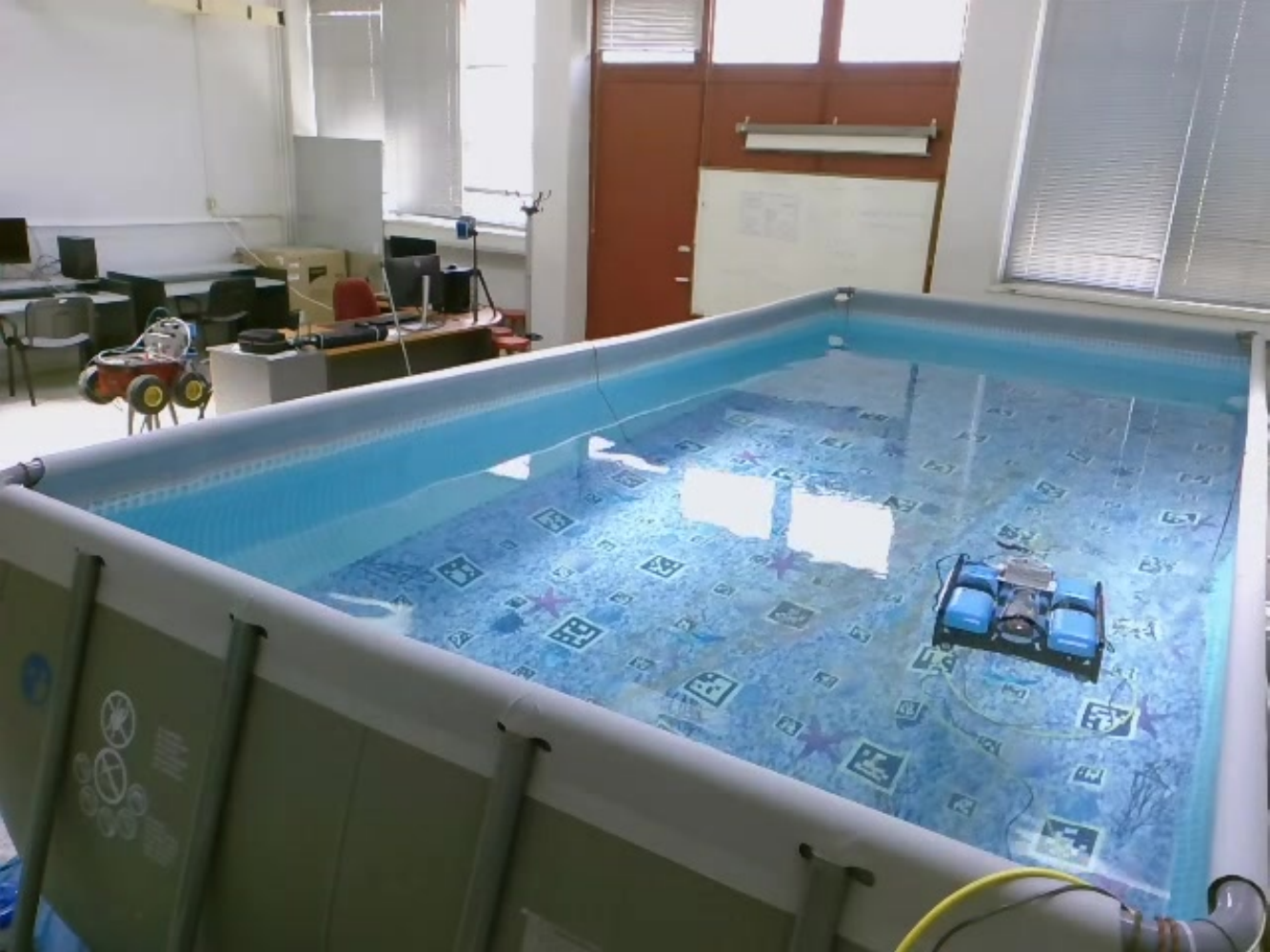}
\caption{BlueROV2 experimental platform operating above the mapped ArUco fiducials in the test tank.}
\label{fig:bluerov2_platform}
\end{figure}

Input constraints have been studied through adaptive, robust, neuroadaptive, finite-time, and sliding-mode control \cite{karason1993adaptive,CHEN2011452,chen2020robust,wang2021neuroadaptive,shao2021adaptive}. Saturation-aware PPC and funnel designs include adaptive-gain and MIMO funnel controllers \cite{ilchmann2004input,hu2022funnel,hopfe2010funnel}, bang-bang designs \cite{liberzon2010bang,liberzon2013bang}, disturbance-regularity and bounded-gain methods \cite{wang2019improved,ji2022saturation,ji2022saturation2}, and approximation-free constrained PPC \cite{10004950}. Flexible envelopes, switching mechanisms, auxiliary dynamics, observers, and neural or fuzzy approximators have also been employed \cite{YONG2020109268,trakas2023robust,10693631,BIKAS2024111663,li2018adaptive,cheng2019neural}. Recent underwater-vehicle work combines prescribed performance with dynamic compensation, neural approximation, filtering, and velocity observation to handle saturation and uncertainty \cite{LiangYuLi2025}. Although effective, the use of such structures increases implementation complexity on robotic platforms that usually operate with limited computational capabilities.

Among low-complexity alternatives, an adjustable outer funnel is employed in \cite{berger2024input}, adaptive performance bounds are adopted in \cite{10693631,11121659}, and desired-motion modification is considered in \cite{FotiadisRovithakisTACON2024,11187072} to recover feasibility. The MIMO adaptive performance design of \cite{10693631} exploits the available control authority across the input channels. However, its adaptation mechanism contains the saturation mismatch divided by the normalized error $|\xi_i|<1$, together with user-selected cross-channel coupling parameters. Consequently, its saturation-induced relaxation contains a term of order $\mathcal O(|\Delta\tau_i|/|\xi_i|)$, where $\Delta\tau_i$ denotes the mismatch between the saturated and required unsaturated input. The scalar design of \cite{11121659} instead adapts the performance envelope directly through $|\Delta\tau_i|$, avoids divisions by the normalized error, and yields an $\mathcal O(|\Delta\tau_i|)$ degradation bound, but it does not address coupled MIMO dynamics. Reference modification methods preserve the nominal performance funnel by altering the commanded motion. In particular, \cite{gkesoulis2026mimocoupled} treats uncertain MIMO systems with coupled and constrained inputs and obtains a mismatch-dependent tracking bound through reference adaptation rather than performance adaptation. In contrast, \cite{FotiadisRovithakisTACON2024} does not consider the state-dependent cross-channel input coupling induced by the inverse inertia matrix. Therefore, to the best of the authors' knowledge, no low-complexity adaptive performance result has yet considered direct mismatch-proportional adaptation of the performance bounds and an explicit treatment of the coupled input action of uncertain Euler--Lagrange underwater vehicles. This combination is challenging because the inverse inertia matrix maps each saturated generalized input into multiple acceleration channels. Saturation in one channel can thus affect all filtered errors, invalidating the decoupled invariant-set arguments employed in the scalar case \cite{11121659}.

Motivated by the aforementioned discussion, we develop an adaptive PPC framework for uncertain AUV Euler--Lagrange dynamics with coupled and saturated inputs. Each performance function is driven directly by the corresponding saturation mismatch and recovers its prescribed form exponentially when saturation is inactive, while the desired trajectory remains unchanged. To overcome the coupling challenge, we construct a coupled invariant set and a saturation-aware auxiliary Lyapunov function whose cancellation structure exploits the positive definite symmetric part of the inverse inertia matrix. This establishes boundedness of all closed-loop signals, explicit feasibility conditions, and a tracking degradation bound that is linear in the saturation mismatch, without the normalized-error denominator appearing in \cite{10693631}.

The main contributions are summarized as follows:
\begin{itemize}
\item We propose a low-complexity adaptive prescribed performance trajectory tracking controller for uncertain AUV Euler--Lagrange dynamics with saturated generalized inputs and coupled error channels, without reference modification, dedicated controller-side observers, regressors, neural networks, or fuzzy approximators.
\item We establish boundedness and adaptive prescribed performance through a coupled invariant-set argument and an auxiliary Lyapunov function tailored to the saturation mismatch.
\item We derive explicit sufficient feasibility conditions linking the required saturation levels to the performance envelopes, controller gains, uncertain dynamics, and the positive definite symmetric part of the inverse inertia matrix, together with a mismatch-dependent tracking degradation bound.
\item We validate the design through comparative BlueROV2 simulations and a real-time pool experiment.
\item The analysis explicitly accommodates coupled MIMO dynamics and constrained generalized inputs while retaining the commanded trajectory and adapting only the performance functions.
\end{itemize}

The rest of the paper is organized as follows. Section~\ref{sec:problem_formulation} states the AUV model, assumptions, and control objective. Section~\ref{Section:Main Results} presents the adaptive PPC design and the stability and performance analysis. Section~\ref{section:SIMULATION} reports comparative simulations and experimental validation, and Section~\ref{sec:conclusion} concludes the paper.

\section{Problem Formulation}
\label{sec:problem_formulation}

\subsection{System Dynamics}
\label{subsec:system_dynamics}

Consider an underwater vehicle described by the generalized Euler--Lagrange dynamics
\begin{align}
M(\eta,\dot\eta)\ddot\eta
+
C(\eta,\dot\eta)\dot\eta
+
N(t,\eta,\dot\eta)
=
\operatorname{sat}(\tau),
\label{eq:agent dynamics}
\end{align}
where $\eta:=[\eta_1,\ldots,\eta_m]^T\in\mathbb R^m$ denotes the generalized position vector, $\dot\eta\in\mathbb R^m$ denotes the generalized velocity vector, $M(\eta,\dot\eta)\in\mathbb R^{m\times m}$ is the inertia matrix, $C(\eta,\dot\eta)\dot\eta$ contains Coriolis and centripetal terms, and $N(t,\eta,\dot\eta)\in\mathbb R^m$ collects hydrodynamic damping, restoring forces, bounded external disturbances, and unmodeled dynamics. The vector $\tau:=[\tau_1,\ldots,\tau_m]^T\in\mathbb R^m$ is the requested generalized control input. The input is subject to componentwise saturation, namely
\begin{equation}
\operatorname{sat}(\tau_i)
=
\begin{cases}
\tau_i, & |\tau_i|<\tau_{i,\max},\\
\tau_{i,\max}\operatorname{sgn}(\tau_i), & |\tau_i|\ge \tau_{i,\max},
\end{cases}
\quad i=1,\ldots,m,
\label{eq:saturation}
\end{equation}
where $\tau_{i,\max}>0$ are the saturation limits. The class of underwater-vehicle dynamics considered is experimentally represented by the BlueROV2 platform shown in Fig.~\ref{fig:bluerov2_platform}.

We impose the following assumptions.

\begin{assumption}
\label{Assumption: regularity}
The matrices $M(\eta,\dot\eta)$, $C(\eta,\dot\eta)$, and the vector $N(t,\eta,\dot\eta)$ are continuous with respect to time and locally Lipschitz with respect to their remaining arguments. Moreover, $N(t,\eta,\dot\eta)$ is uniformly bounded with respect to time on compact subsets of $\mathbb R^m\times\mathbb R^m$.
\end{assumption}

\begin{assumption}
\label{Assumption: M symmetric part positive definite}
The symmetric part of the inertia matrix $M_s(\eta,\dot\eta):=[M(\eta,\dot\eta)+M^T(\eta,\dot\eta)]/2$ is positive definite for all $(\eta,\dot\eta)\in\mathbb R^m\times\mathbb R^m$.
\end{assumption}

\begin{assumption}
\label{Assumption: measurable states}
The generalized position $\eta$ and velocity $\dot\eta$ are available for measurement.
\end{assumption}

\begin{assumption}
\label{Assumption: Reference Bounded}
The desired trajectory $\eta_d(t):=[\eta_{d,1}(t),\ldots,\eta_{d,m}(t)]^T$ is twice continuously differentiable and the signals $\eta_d(t)$, $\dot\eta_d(t)$, and $\ddot\eta_d(t)$ are uniformly bounded for all $t\ge0$. The corresponding bounds are not required in the controller.
\end{assumption}

\begin{remark}
We assume that saturation is imposed on the generalized force vector after allocation. If saturation is imposed at the thruster level, then the allocation map must be incorporated into the input matrix and the corresponding positive definiteness condition must be rechecked.
\end{remark}

\subsection{Control Problem}
\label{subsec:control_problem}

Let $e(t):=\eta(t)-\eta_d(t)$ denote the position tracking error. For each $i=1,\ldots,m$, define the filtered tracking error $s_i(t):=\dot e_i(t)+\alpha_i e_i(t)$, with $\alpha_i>0$.
In vector form,
\begin{equation}
s(t)=\dot e(t)+A e(t),
\label{eq:filtered error vector}
\end{equation}
where $A:=\operatorname{diag}(\alpha_1,\ldots,\alpha_m)$. The control objective is to design a continuous state feedback control law such that all closed loop signals remain bounded and the filtered error $s_i(t)$ evolves within adaptive prescribed performance bounds. In contrast to standard prescribed performance control, the performance bounds are not fixed functions of time. Instead, they are dynamically adjusted according to the actual mismatch between the requested and saturated input. Thus, the performance specification is relaxed only when saturation occurs and recovers its nominal value when the requested input becomes feasible. The controller should not employ any knowledge of $M$, $C$, or $N$, and should not require dedicated controller-side observers, regressors, neural networks, fuzzy approximators, or other approximation structures.

\section{Main Results}
\label{Section:Main Results}

\subsection{Control Design}
\label{subsection:Control Design}

The proposed control design consists of the following steps.

\textit{Step 1:} For each $i=1,\ldots,m$, introduce the adaptive performance function $\rho_i(t)$ generated by
\begin{equation}
\dot\rho_i(t)
=
-\beta_i\left(\rho_i(t)-\rho_{i,\infty}\right)
+
\ell_i|\Delta\tau_i(t)|,
\label{eq:adaptive performance function}
\end{equation}
where $\beta_i>0$, $\ell_i>0$, and $\rho_{i,\infty}>0$ are design constants, and the saturation mismatch is defined as $\Delta\tau(t):=\tau(t)-\operatorname{sat}(\tau(t))$.
The initial values are selected such that
\begin{equation}
\rho_i(0)>\max\{|s_i(0)|,\rho_{i,\infty}\},
\qquad i=1,\ldots,m.
\label{eq:rho initial condition}
\end{equation}
Here, $\rho_{i,\infty}$ denotes the nominal steady-state funnel width, $\beta_i$ determines the exponential recovery rate toward it, $\ell_i$ sets the sensitivity of the funnel enlargement to the saturation mismatch, and $\rho_i(0)$ specifies the initial funnel width and is selected to contain $s_i(0)$.

\textit{Step 2:} Define the normalized filtered error
\begin{equation}
\xi_i(t):=\frac{s_i(t)}{\rho_i(t)},
\qquad i=1,\ldots,m.
\label{eq:xi scalar definition}
\end{equation}
Equivalently, $\xi(t)=R^{-1}(t)s(t)$, where $R(t):=\operatorname{diag}(\rho_1(t),\ldots,\rho_m(t))$.

\textit{Step 3:} Let $T:(-1,1)\to\mathbb R$ be an odd, continuously differentiable, strictly increasing function satisfying $\lim_{\xi\to1^-}T(\xi)=+\infty$, $\lim_{\xi\to-1^+}T(\xi)=-\infty$, and $T(0)=0$. For example, one may choose $T(\xi)=\ln[(1+\xi)/(1-\xi)]$. Define $\mathbf T(\xi):=[T(\xi_1),\ldots,T(\xi_m)]^T$ and $K:=\operatorname{diag}(k_1,\ldots,k_m)$, with $k_i>0$.
The requested control input is
\begin{equation}
\tau(t)=-K\mathbf T(\xi(t)).
\label{eq:control law vector form}
\end{equation}

\begin{remark}
The control law \eqref{eq:adaptive performance function}--\eqref{eq:control law vector form} does not modify the desired trajectory and does not introduce a reference modification signal. Instead, the prescribed performance functions are adapted directly through the input mismatch $|\Delta\tau_i|$. When saturation is inactive, $\Delta\tau_i=0$, and each $\rho_i(t)$ converges exponentially to $\rho_{i,\infty}$. When saturation occurs, the corresponding performance function relaxes smoothly.
\end{remark}

The resulting closed-loop signal flow is summarized in Fig.~\ref{fig:closed_loop_architecture}.

\begin{figure*}[t]
\centering
\begin{tikzpicture}[
  >=Latex,
  font=\footnotesize,
  node distance=3.5mm,
  block/.style={draw, align=center, minimum height=11mm, fill=gray!8},
  auxiliary/.style={draw, align=center, minimum height=10mm, fill=gray!15},
  line/.style={->, semithick}
]
\node[block, text width=19mm] (reference)
  {Desired motion\\$\eta_d,\dot\eta_d$};
\node[block, text width=27mm, right=of reference] (errors)
  {Tracking and filtering\\$e=\eta-\eta_d$\\$s=\dot e+Ae$};
\node[block, text width=23mm, right=of errors] (normalization)
  {Adaptive normalization\\$\xi=R^{-1}s$};
\node[block, text width=25mm, right=of normalization] (controller)
  {Transformed control\\$\tau=-K\mathbf T(\xi)$};
\node[block, text width=20mm, right=of controller] (saturation)
  {Input saturation\\$\operatorname{sat}(\tau)$};
\node[block, text width=23mm, right=15mm of saturation] (plant)
  {Uncertain coupled\\AUV dynamics};

\node[auxiliary, text width=43mm, below=13mm of normalization] (adaptation)
  {Performance adaptation\\$\dot\rho_i=-\beta_i(\rho_i-\rho_{i,\infty})+\ell_i|\Delta\tau_i|$};
\node[auxiliary, text width=38mm, below=13mm of saturation] (mismatch)
  {Input mismatch\\$\Delta\tau=\tau-\operatorname{sat}(\tau)$};

\draw[line] (reference) -- (errors);
\draw[line] (errors) -- node[above] {$s$} (normalization);
\draw[line] (normalization) -- (controller);
\draw[line] (controller) -- node[above] {$\tau$} (saturation);
\draw[line] (saturation) -- node[above=1mm,font=\scriptsize] {$\operatorname{sat}(\tau)$} (plant);
\coordinate (feedbackright) at ($(plant.south)+(0,-28mm)$);
\coordinate (feedbackleft) at (errors.south |- feedbackright);
\draw[line] (plant.south) -- (feedbackright) -- (feedbackleft) -- (errors.south);
\node[below=1mm,fill=white,inner sep=1pt] at ($(feedbackleft)!0.5!(feedbackright)$)
  {$\eta,\ \dot\eta$};
\draw[line] (controller.south) |- ($(mismatch.north west)+(0,-2mm)$);
\draw[line] (saturation.south) -- (mismatch.north);
\draw[line] (mismatch.west) -- (adaptation.east);
\draw[line] (adaptation.north) -- node[left] {$\rho$} (normalization.south);
\end{tikzpicture}
\caption{Closed-loop adaptive prescribed performance architecture under input saturation. The mismatch between the requested and applied inputs adapts the performance functions, which recover their nominal values when saturation is inactive.}
\label{fig:closed_loop_architecture}
\end{figure*}

\subsection{Stability and Performance Analysis}
\label{subsection:Stability and Performance Analysis}

We next establish sufficient feasibility conditions under which the proposed controller guarantees bounded closed-loop operation, maintains the filtered errors within their adaptive performance bounds, and quantifies the degradation induced by input saturation.

\begin{theorem}
\label{Theorem_1}
Consider the Euler--Lagrange system \eqref{eq:agent dynamics} under Assumptions~\ref{Assumption: regularity}--\ref{Assumption: Reference Bounded}. Let the adaptive performance functions \eqref{eq:adaptive performance function} and the control law \eqref{eq:control law vector form} be applied.

\textit{Auxiliary quantities:} For any constants $\Lambda_i>0$ and $\varepsilon_i\in(0,1)$, $i=1,\ldots,m$, define $G_s(\eta,\dot\eta):=[M^{-1}(\eta,\dot\eta)+M^{-T}(\eta,\dot\eta)]/2$ and
\begin{align}
H(t,\eta,\dot\eta)
:=&
-M^{-1}(\eta,\dot\eta)
\left(
C(\eta,\dot\eta)\dot\eta
+
N(t,\eta,\dot\eta)
\right)
\nonumber\\
&-
\ddot\eta_d(t)
+
A\left(\dot\eta-\dot\eta_d(t)\right).
\label{eq:H theorem}
\end{align}
Further, let $\bar\rho_i^\Lambda:=\max\{\rho_i(0),\rho_{i,\infty}+\ell_i\Lambda_i/\beta_i\}$, $\bar s_i^\Lambda:=\bar\rho_i^\Lambda$, $\bar e_i^\Lambda:=|e_i(0)|+\bar s_i^\Lambda/\alpha_i$, $\bar v_i^\Lambda:=\bar s_i^\Lambda+\alpha_i\bar e_i^\Lambda$, $a_i^\Lambda:=\beta_i(\bar\rho_i^\Lambda-\rho_{i,\infty})$, $a^\Lambda:=[a_1^\Lambda,\ldots,a_m^\Lambda]^T$, and
\begin{equation}
\begin{aligned}
S
:=
\Big\{
(\eta,\dot\eta):
&
|\eta_i|
\le
\bar\eta_{d,i}+\bar e_i^\Lambda,
|\dot\eta_i|
\le
\bar{\dot\eta}_{d,i}+\bar v_i^\Lambda,
\\
&
i=1,\ldots,m
\Big\},
\end{aligned}
\label{eq:S theorem}
\end{equation}
where $\bar\eta_{d,i}$ and $\bar{\dot\eta}_{d,i}$ denote finite bounds of $\eta_{d,i}$ and $\dot\eta_{d,i}$, respectively. The constants $\Lambda_i$ serve as auxiliary mismatch margins in the feasibility test.

Let $\bar\rho^\Lambda:=[\bar\rho_1^\Lambda,\ldots,\bar\rho_m^\Lambda]^T$ and define
\begin{align}
D^\Lambda
:=&
\sup_{t\ge0}
\max_{(\eta,\dot\eta)\in S}
\frac{
\|H(t,\eta,\dot\eta)\|+\|a^\Lambda\|
}{
\lambda_{\min}\big(G_s(\eta,\dot\eta)\big)
},
\label{eq:D Lambda theorem}
\\
\widehat V_0
:=&
\sum_{i=1}^m
\rho_i(0)k_i|\xi_i(0)|T\big(|\xi_i(0)|\big),
\label{eq:V0 upper theorem}
\\
C^\Lambda
:=&
\max\left\{
\widehat V_0,
\|\bar\rho^\Lambda\|D^\Lambda
\right\}.
\label{eq:C Lambda theorem}
\end{align}

\textit{Feasibility condition:} If $\varepsilon_i$ are selected such that
\begin{align}
k_iT\left(\frac{\varepsilon_i}{2}\right)
>
\frac{2C^\Lambda}{\rho_{i,\infty}\varepsilon_i},
\qquad i=1,\ldots,m,
\label{eq:epsilon feasibility condition}
\end{align}
and the input saturation limits satisfy
\begin{align}
\tau_{i,\max}
>
\max\left\{
k_iT(\varepsilon_i)-\Lambda_i,
\frac{2C^\Lambda}{\rho_{i,\infty}\varepsilon_i}
\right\},
\qquad i=1,\ldots,m,
\label{eq:feasibility condition}
\end{align}
then the following conclusions hold.

\textit{(i) Boundedness and prescribed performance:} All closed-loop signals remain bounded for all $t\ge0$ and
\begin{equation}
|\xi_i(t)|<1,
\qquad
\forall t\ge0,
\quad i=1,\ldots,m.
\label{eq:xi theorem result}
\end{equation}
More specifically, the transformed errors satisfy
\begin{equation}
|T(\xi_i(t))|
<
\frac{\tau_{i,\max}+\Lambda_i}{k_i},
\qquad
\forall t\ge0,
\quad i=1,\ldots,m.
\label{eq:T xi theorem result}
\end{equation}
Consequently,
\begin{equation}
|s_i(t)|<\rho_i(t),
\qquad
\forall t\ge0,
\quad i=1,\ldots,m.
\label{eq:s theorem result}
\end{equation}

\textit{(ii) Adaptive performance bounds:} The adaptive performance functions satisfy
\begin{align}
\rho_i(t)
\le&
\rho_{i,\infty}
+
\left(\rho_i(0)-\rho_{i,\infty}\right)e^{-\beta_i t}
\nonumber\\
&+
\frac{\ell_i}{\beta_i}
\sup_{r\ge0}|\Delta\tau_i(r)|
\left(1-e^{-\beta_i t}\right).
\label{eq:rho theorem result}
\end{align}

\textit{(iii) Position tracking bounds:} The position tracking errors satisfy
\begin{equation}
|e_i(t)|
\le
|e_i(0)|e^{-\alpha_i t}
+
\int_0^t e^{-\alpha_i(t-r)}\rho_i(r)dr.
\label{eq:e theorem result}
\end{equation}
Thus,
\begin{equation}
\limsup_{t\to\infty}|e_i(t)|
\le
\frac{1}{\alpha_i}
\left(
\rho_{i,\infty}
+
\frac{\ell_i}{\beta_i}
\sup_{t\ge0}|\Delta\tau_i(t)|
\right).
\label{eq:e limsup theorem result}
\end{equation}
\end{theorem}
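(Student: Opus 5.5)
We follow the standard PPC pattern: first establish local existence, then maintain a coupled invariant set on a maximal interval, and finally use a contradiction argument to extend the solution to all time.

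\textbf{Existence and setup.} Collect the state as $z:=(e,\dot e,\rho)$ and consider the open set $\Omega:=\{|s_i|<\rho_i,\ \rho_i>0\}$. By \eqref{eq:rho initial condition}, the initial state lies in $\Omega$. Assumption~\ref{Assumption: regularity} makes the closed-loop vector field continuous in $t$ and locally Lipschitz in $z$ on $\Omega$; $\operatorname{sat}$ and $|\cdot|$ are Lipschitz. Hence a unique maximal solution exists on $[0,t_{\max})$, with $|\xi_i|<1$ there. Differentiating $s$ and using \eqref{eq:agent dynamics} gives
\[
\dot s = H + M^{-1}\operatorname{sat}(\tau) = H + M^{-1}\tau - M^{-1}\Delta\tau .
\]

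\textbf{Bootstrap on $|\Delta\tau_i|<\Lambda_i$.} Let $t^*\le t_{\max}$ be the supremum of times such that $|\Delta\tau_i(r)|\le\Lambda_i$ for all $r\le t$ and all $i$. This is well defined because, by \eqref{eq:feasibility condition} and \eqref{eq:V0 upper theorem}--\eqref{eq:epsilon feasibility condition}, $\tau(0)$ is nearly unsaturated.

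On $[0,t^*)$, the comparison lemma applied to \eqref{eq:adaptive performance function} gives $\rho_{i,\infty}\le\rho_i\le\bar\rho_i^\Lambda$. Together with $\dot\rho_i\ge -a_i^\Lambda$, this yields $|s_i|<\bar s_i^\Lambda$. Integrating $\dot e_i=-\alpha_i e_i+s_i$ gives $|e_i|\le\bar e_i^\Lambda$ and $|\dot e_i|\le\bar v_i^\Lambda$, so $(\eta,\dot\eta)\in S$ and $D^\Lambda$ applies.

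Next, introduce the auxiliary function
\[
V:=\sum_i \rho_i k_i\,|\xi_i|\,T(|\xi_i|),
\]
or equivalently one built from $\int_0^{s_i}k_iT(\sigma/\rho_i)\,d\sigma$, whose gradient with respect to $s$ is $K\mathbf T(\xi)=-\tau$. Its derivative splits into three parts:
\begin{itemize}
\item The quadratic part $-\tau^TM^{-1}\tau=-\tau^TG_s\tau\le-\lambda_{\min}(G_s)\|\tau\|^2$.
\item The terms $-\tau^T H$ and the $\dot\rho$-terms, which are bounded by $\|\tau\|(\|H\|+\|a^\Lambda\|)$ after using $|\xi_i|<1$.
\item The mismatch term $\tau^TM^{-1}\Delta\tau$.
\end{itemize}
The mismatch term is the delicate one. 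The key is the cancellation structure: since $\Delta\tau_i$ has the sign of $\tau_i$ and $|\Delta\tau_i|\le|\tau_i|$, write $\tau=\operatorname{sat}(\tau)+\Delta\tau$. Then
\[
-\tau^TM^{-1}\operatorname{sat}(\tau)=-\tau^TM^{-1}\tau+\tau^TM^{-1}\Delta\tau .
\]
Directly bounding $-\operatorname{sat}(\tau)^T G_s \tau$ is not sign-definite, so I would instead bound the combination by $-\lambda_{\min}(G_s)\|\operatorname{sat}(\tau)\|\,(\|\tau\|-\ldots)$. A cleaner route is to show that $\dot V<0$ whenever $\|\tau\|\ge D^\Lambda$, using the monotone structure $\tau^T G_s\operatorname{sat}(\tau)$ restricted to the saturation-consistent orthant. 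I expect this to be the main obstacle, and I may need to replace $\|\tau\|$ by $\|\operatorname{sat}(\tau)\|$ in the threshold, which is where the condition $\tau_{i,\max}>2C^\Lambda/(\rho_{i,\infty}\varepsilon_i)$ enters.

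This yields $V(t)\le\max\{\widehat V_0,\|\bar\rho^\Lambda\|D^\Lambda\}=C^\Lambda$. Since $V\ge\rho_{i,\infty}k_i|\xi_i|T(|\xi_i|)$, condition \eqref{eq:epsilon feasibility condition} forces $|\xi_i|<\varepsilon_i$: if $|\xi_i|\ge\varepsilon_i$, then $V\ge\rho_{i,\infty}\varepsilon_i k_iT(\varepsilon_i/2)>2C^\Lambda$, a contradiction. Therefore $|\tau_i|=k_i|T(\xi_i)|<k_iT(\varepsilon_i)<\tau_{i,\max}+\Lambda_i$, so $|\Delta\tau_i|<\Lambda_i$ strictly. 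By continuity this contradicts the definition of $t^*$ unless $t^*=t_{\max}$. This also gives \eqref{eq:T xi theorem result}.

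\textbf{Extension and the remaining bounds.} Since $|\xi_i|\le\varepsilon_i<1$ and all states stay in compact sets, the solution remains in a compact subset of $\Omega$. Hence $t_{\max}=\infty$, which proves (i) together with \eqref{eq:s theorem result}. Conclusion (ii) follows by variation of constants on the linear equation \eqref{eq:adaptive performance function}, bounding $|\Delta\tau_i|$ by its supremum. For (iii), solve $\dot e_i=-\alpha_ie_i+s_i$ and use $|s_i|<\rho_i$ to obtain \eqref{eq:e theorem result}. The limsup bound \eqref{eq:e limsup theorem result} then follows from (ii) and $\int_0^t e^{-\alpha_i(t-r)}\,dr\le1/\alpha_i$.
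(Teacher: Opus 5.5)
There is a genuine gap, located exactly at the step you flag as the main obstacle. It cannot be closed with a Lyapunov function whose $s$-gradient is the unsaturated signal $K\mathbf T(\xi)=-\tau$. With that choice the input enters $\dot V$ through $-\tau^TM^{-1}\operatorname{sat}(\tau)$, which is a bilinear form in two \emph{different} vectors, and under the theorem's hypotheses it has no sign.

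First, the assumption on $M$ only constrains its symmetric part, and the skew part of $M^{-1}$ does not cancel here: the identity $x^TM^{-1}x=x^TG_sx$ requires identical arguments. Second, even the symmetric part can have the wrong sign. Suppose $G_s$ has unit diagonal and off-diagonal entry $g$ with $|g|<1$, channel~1 is unsaturated and channel~2 is saturated. Then $\tau^TG_s\operatorname{sat}(\tau)=\tau_1^2+g\tau_1\operatorname{sgn}(\tau_2)(\tau_{2,\max}+|\tau_2|)+\tau_{2,\max}|\tau_2|$. This is negative when $g\tau_1\tau_2<0$, $|g\tau_1|>\tau_{2,\max}$ and $|\tau_2|$ is large enough. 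Hence ``$\dot V<0$ whenever $\|\tau\|\ge D^\Lambda$'' does not follow, and no orthant or monotonicity argument recovers it without extra diagonal-dominance conditions that the theorem does not impose.

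Separately, your two candidate functions are not equivalent. The function $\rho_ik_i|\xi_i|T(|\xi_i|)$ has $s_i$-gradient $k_i[T(\xi_i)+\xi_iT'(\xi_i)]$, not $k_iT(\xi_i)$. You use the lower bound of the first function and the derivative of the second.

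The missing idea is to put the saturation inside the Lyapunov function: take $V=\sum_i\rho_iF_i(\xi_i)$ with $F_i(r)=\int_0^r\operatorname{sat}(k_iT(\sigma))\,d\sigma$. Write $z:=\operatorname{sat}(K\mathbf T(\xi))=-\operatorname{sat}(\tau)$. The $s$-gradient of $V$ is $z$ and $\dot s=H-M^{-1}z$, so the input term is exactly $-z^TM^{-1}z=-z^TG_sz\le-\lambda_{\min}(G_s)\|z\|^2$. The mismatch never appears as a cross term.

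Since $0\le F_i(r)\le z_i(r)r$, the contribution $\sum_i\dot\rho_i[F_i(\xi_i)-z_i\xi_i]$ splits into a nonpositive $\ell_i|\Delta\tau_i|$ part and a $\beta_i$ part bounded by $\|a^\Lambda\|\|z\|$. This gives $\dot V\le-\lambda_{\min}(G_s)\|z\|^2+\|z\|(\|H\|+\|a^\Lambda\|)$.

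Now set $\theta_i^\Lambda:=T^{-1}((\tau_{i,\max}+\Lambda_i)/k_i)$ and choose $c_\Lambda$ with $C^\Lambda<c_\Lambda<\min_i\rho_{i,\infty}F_i(\theta_i^\Lambda)$. On the level $V=c_\Lambda$ one has $c_\Lambda\le\|\bar\rho^\Lambda\|\|z\|$, hence $\|z\|>D^\Lambda$ and $\dot V<0$. Invariance of this sublevel set then gives $|\xi_i|<\theta_i^\Lambda$, that is, $|\Delta\tau_i|<\Lambda_i$.

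Because $F_i$ grows only linearly once saturated, you need the bound $\rho_{i,\infty}F_i(\theta_i^\Lambda)\ge\rho_{i,\infty}\int_{\varepsilon_i/2}^{\varepsilon_i}z_i(r)\,dr>C^\Lambda$. This requires $z_i(r)>2C^\Lambda/(\rho_{i,\infty}\varepsilon_i)$ on $[\varepsilon_i/2,\varepsilon_i]$. That is precisely where both $k_iT(\varepsilon_i/2)>2C^\Lambda/(\rho_{i,\infty}\varepsilon_i)$ and $\tau_{i,\max}>2C^\Lambda/(\rho_{i,\infty}\varepsilon_i)$ are used. The remaining condition $\tau_{i,\max}>k_iT(\varepsilon_i)-\Lambda_i$ gives $\theta_i^\Lambda>\varepsilon_i$.

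The rest of your outline matches the paper: the bootstrap placing $(\eta,\dot\eta)$ in $S$, the extension to $t_{\max}=\infty$, and parts (ii)--(iii).
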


\begin{proof}
By Assumption~\ref{Assumption: M symmetric part positive definite}, $M(\eta,\dot\eta)$ is nonsingular and the dynamics can be written as
\begin{equation}
\ddot\eta
=
f(t,\eta,\dot\eta)
+
G(\eta,\dot\eta)\operatorname{sat}(\tau),
\label{eq:state space proof}
\end{equation}
where
\begin{align}
f(t,\eta,\dot\eta)
&:=
-M^{-1}(\eta,\dot\eta)
\left(
C(\eta,\dot\eta)\dot\eta
+
N(t,\eta,\dot\eta)
\right),
\label{eq:f definition proof}
\\
G(\eta,\dot\eta)
&:=
M^{-1}(\eta,\dot\eta).
\label{eq:G definition proof}
\end{align}
Moreover, Assumption~\ref{Assumption: M symmetric part positive definite} implies that $G_s=(G+G^T)/2$ is positive definite. Indeed, for every $y\in\mathbb R^m\setminus\{0\}$, letting $x=M^{-1}y$ yields $y^TG_sy=y^TGy=x^TM^Tx=x^TM_sx>0$.
From \eqref{eq:filtered error vector},
\begin{equation}
\dot e
=
s-Ae
=
R(t)\xi-Ae.
\label{eq:e dynamics proof}
\end{equation}
Since the control law is $\tau=-K\mathbf T(\xi)$ and the saturation is odd and componentwise, $\operatorname{sat}(\tau)=-\operatorname{sat}(K\mathbf T(\xi))$.
Differentiating \eqref{eq:filtered error vector} and using \eqref{eq:state space proof}, we obtain $\dot s=H(t,\eta,\dot\eta)-G(\eta,\dot\eta)\operatorname{sat}(K\mathbf T(\xi))$, where $H$ is given by \eqref{eq:H theorem}.
Differentiating $\xi=R^{-1}s$ gives
\begin{align}
\dot\xi
=
R^{-1}
\Big[
H(t,\eta,\dot\eta)
-
G(\eta,\dot\eta)
\operatorname{sat}\left(K\mathbf T(\xi)\right)
-
\dot R\xi
\Big].
\label{eq:xi dynamics proof}
\end{align}
Finally, the adaptive performance dynamics are
\begin{equation}
\dot\rho_i
=
-\beta_i(\rho_i-\rho_{i,\infty})
+
\ell_i|\Delta\tau_i|,
\quad i=1,\ldots,m.
\label{eq:rho dynamics proof}
\end{equation}

Let $\chi:=[e^T,\xi^T,\rho^T]^T$, where $\rho:=[\rho_1,\ldots,\rho_m]^T$.
The inclusion of $e$ in $\chi$ is necessary, since $\xi$ and $\rho$ determine $s=R\xi$, whereas $\dot e=s-Ae$ additionally requires $e$.
Consider the open set $\Omega:=\mathbb R^m\times(-1,1)^m\times(0,\infty)^m$. For every $\chi\in\Omega$, define $\eta_\chi(t):=e+\eta_d(t)$, $\dot\eta_\chi(t):=R\xi-Ae+\dot\eta_d(t)$, $\tau_\chi:=-K\mathbf T(\xi)$, and $\Delta\tau_\chi:=\tau_\chi-\operatorname{sat}(\tau_\chi)$, where $R=\operatorname{diag}(\rho_1,\ldots,\rho_m)$. Let also $\gamma_i(t,\chi):=-\beta_i(\rho_i-\rho_{i,\infty})+\ell_i|\Delta\tau_{\chi,i}|$, $i=1,\ldots,m$, with $\gamma(t,\chi):=[\gamma_1(t,\chi),\ldots,\gamma_m(t,\chi)]^T$, and define $\Gamma(t,\chi):=\operatorname{diag}(\gamma_1(t,\chi),\ldots,\gamma_m(t,\chi))$.
Then \eqref{eq:e dynamics proof}, \eqref{eq:xi dynamics proof}, and \eqref{eq:rho dynamics proof} can be written compactly as
\begin{equation}
\dot\chi
=
\mathcal F(t,\chi),
\label{eq:closed loop chi dynamics proof}
\end{equation}
where
\begin{equation}
\mathcal F(t,\chi)
:=
\begin{bmatrix}
R\xi-Ae
\\
\begin{aligned}
R^{-1}\big[&H(t,\eta_\chi,\dot\eta_\chi)\\[-1mm]
&-G(\eta_\chi,\dot\eta_\chi)
\operatorname{sat}\left(K\mathbf T(\xi)\right)
-\Gamma(t,\chi)\xi\big]
\end{aligned}
\\
\gamma(t,\chi)
\end{bmatrix}.
\label{eq:F chi proof}
\end{equation}
In view of Assumptions~\ref{Assumption: regularity}--\ref{Assumption: Reference Bounded}, the vector field $\mathcal F$ in \eqref{eq:closed loop chi dynamics proof} is continuous with respect to time and locally Lipschitz with respect to $\chi$ on $\Omega$. Moreover, by the selection \eqref{eq:rho initial condition}, it holds that $|\xi_i(0)|<1$, $i=1,\ldots,m$, and $\rho_i(0)>\rho_{i,\infty}>0$. Hence, $\chi(0)\in\Omega$. Thus, \eqref{eq:closed loop chi dynamics proof} admits a unique maximal solution $\chi:[0,t_{\max})\to\Omega$, where $t_{\max}\in(0,\infty]$.

The solution of \eqref{eq:rho dynamics proof} and \eqref{eq:rho initial condition} satisfies $\rho_i(t)\ge\rho_{i,\infty}$ on $[0,t_{\max})$. Define, for each $i=1,\ldots,m$, $z_i(r):=\operatorname{sat}(k_iT(r))$, $r\in(-1,1)$, and $F_i(r):=\int_0^r z_i(\sigma)d\sigma$. Since $T$ is strictly increasing and $T(0)=0$, and since the saturation function is odd and monotone, $z_i(r)$ has the same sign as $r$. Hence,
\begin{equation}
0\le F_i(r)\le z_i(r)r,
\qquad r\in(-1,1).
\label{eq:F inequality proof}
\end{equation}
Consider the auxiliary Lyapunov function
\begin{equation}
V(t,\xi)
:=
\sum_{i=1}^m
\rho_i(t)F_i(\xi_i).
\label{eq:V proof}
\end{equation}
By \eqref{eq:F inequality proof}, $V(0)\le\widehat V_0\le C^\Lambda$. Let
\begin{equation}
\theta_i^\Lambda
:=
T^{-1}\left(\frac{\tau_{i,\max}+\Lambda_i}{k_i}\right),
\qquad i=1,\ldots,m.
\label{eq:theta Lambda proof}
\end{equation}
The first term of \eqref{eq:feasibility condition} gives $\theta_i^\Lambda>\varepsilon_i$. Moreover, for every $r\in[\varepsilon_i/2,\varepsilon_i]$, \eqref{eq:epsilon feasibility condition} and the second term of \eqref{eq:feasibility condition} give
\begin{equation}
z_i(r)
>
\frac{2C^\Lambda}{\rho_{i,\infty}\varepsilon_i}.
\label{eq:z epsilon lower proof}
\end{equation}
Consequently,
\begin{align}
\rho_{i,\infty}F_i(\theta_i^\Lambda)
&\ge
\rho_{i,\infty}
\int_{\varepsilon_i/2}^{\varepsilon_i}z_i(r)dr
>
C^\Lambda.
\label{eq:F theta lower proof}
\end{align}
Thus, there exists a constant $c_\Lambda$ such that
\begin{equation}
C^\Lambda
<
c_\Lambda
<
\min_{1\le i\le m}
\rho_{i,\infty}F_i(\theta_i^\Lambda).
\label{eq:c Lambda proof}
\end{equation}
Define the sublevel set
\begin{equation}
\mathcal L_\Lambda
:=
\left\{
(\rho,\xi)\in
\prod_{i=1}^m[\rho_{i,\infty},\infty)\times(-1,1)^m:
V(\rho,\xi)\le c_\Lambda
\right\}.
\label{eq:L Lambda proof}
\end{equation}
Since $V(0)<c_\Lambda$, the initial condition lies in its interior relative to $\rho_i\ge\rho_{i,\infty}$. Furthermore, \eqref{eq:c Lambda proof} implies that every $(\rho,\xi)\in\mathcal L_\Lambda$ satisfies $|\xi_i|<\theta_i^\Lambda$, and therefore $|\tau_i|<\tau_{i,\max}+\Lambda_i$.

There exists a nonempty interval $[0,t_s)\subseteq[0,t_{\max})$ such that $(\rho(t),\xi(t))\in\mathcal L_\Lambda$ for all $t\in[0,t_s)$. On this interval, $|\tau_i(t)|<\tau_{i,\max}+\Lambda_i$ for every $i=1,\ldots,m$.
Thus,
\begin{equation}
|\Delta\tau_i(t)|
\le
\Lambda_i,
\qquad
\forall t\in[0,t_s),
\quad i=1,\ldots,m.
\label{eq:Delta bound proof}
\end{equation}
Combining \eqref{eq:Delta bound proof} with \eqref{eq:rho dynamics proof}, we obtain $\dot\rho_i\le-\beta_i(\rho_i-\rho_{i,\infty})+\ell_i\Lambda_i$.
By scalar comparison,
\begin{align}
\rho_i(t)
\le&
\rho_{i,\infty}
+
\left(\rho_i(0)-\rho_{i,\infty}\right)e^{-\beta_i t}
\nonumber\\
&+
\frac{\ell_i\Lambda_i}{\beta_i}
\left(1-e^{-\beta_i t}\right)
\le
\bar\rho_i^\Lambda,
\label{eq:rho bound proof}
\end{align}
for all $t\in[0,t_s)$ and $i=1,\ldots,m$.

Moreover, since $|\xi_i(t)|<\theta_i^\Lambda<1$, we have $|s_i(t)|=\rho_i(t)|\xi_i(t)|\le\bar\rho_i^\Lambda=\bar s_i^\Lambda$. Since $s_i=\dot e_i+\alpha_i e_i$, we have $e_i(t)=e_i(0)e^{-\alpha_i t}+\int_0^t e^{-\alpha_i(t-r)}s_i(r)dr$. Thus, $|e_i(t)|\le|e_i(0)|+\bar s_i^\Lambda/\alpha_i=\bar e_i^\Lambda$ and $|\dot e_i(t)|\le|s_i(t)|+\alpha_i|e_i(t)|\le\bar s_i^\Lambda+\alpha_i\bar e_i^\Lambda=\bar v_i^\Lambda$. Consequently, by Assumption~\ref{Assumption: Reference Bounded}, $(\eta(t),\dot\eta(t))\in S$ for all $t\in[0,t_s)$.

We now prove that no trajectory can leave $\mathcal L_\Lambda$.
For compactness of notation in the rest of the proof, let $z(\xi):=\operatorname{sat}(K\mathbf T(\xi))=[z_1(\xi_1),\ldots,z_m(\xi_m)]^T$.
Differentiating \eqref{eq:V proof} and using \eqref{eq:xi dynamics proof}, we obtain
\begin{align}
\dot V
=&
z^T
\left(
H-Gz-\dot R\xi
\right)
+
\sum_{i=1}^m\dot\rho_iF_i(\xi_i)
\nonumber\\
=&
z^TH
-
z^TGz
+
\sum_{i=1}^m
\dot\rho_i
\left[
F_i(\xi_i)-z_i(\xi_i)\xi_i
\right].
\label{eq:Vdot exact proof}
\end{align}
Substituting \eqref{eq:rho dynamics proof} into the last term yields
\begin{align}
&\sum_{i=1}^m
\dot\rho_i
\left[
F_i(\xi_i)-z_i(\xi_i)\xi_i
\right]
\nonumber\\
=&
\sum_{i=1}^m
\left[
-\beta_i(\rho_i-\rho_{i,\infty})
+
\ell_i|\Delta\tau_i|
\right]
\left[
F_i(\xi_i)-z_i(\xi_i)\xi_i
\right]
\nonumber\\
=&
\sum_{i=1}^m
\beta_i(\rho_i-\rho_{i,\infty})
\left[
z_i(\xi_i)\xi_i-F_i(\xi_i)
\right]
\nonumber\\
&-
\sum_{i=1}^m
\ell_i|\Delta\tau_i|
\left[
z_i(\xi_i)\xi_i-F_i(\xi_i)
\right].
\label{eq:rho terms proof}
\end{align}
By \eqref{eq:F inequality proof}, the last term in \eqref{eq:rho terms proof} is nonpositive. Therefore,
\begin{align}
\dot V
\le&
z^TH
-
z^TGz
\nonumber\\
&+
\sum_{i=1}^m
\beta_i(\rho_i-\rho_{i,\infty})
\left[
z_i(\xi_i)\xi_i-F_i(\xi_i)
\right].
\label{eq:Vdot after rho proof}
\end{align}
Furthermore, on $[0,t_s)$, \eqref{eq:rho dynamics proof} and \eqref{eq:rho initial condition} imply $\rho_i(t)\ge\rho_{i,\infty}$, whereas \eqref{eq:rho bound proof} gives $\beta_i(\rho_i-\rho_{i,\infty})\le a_i^\Lambda$. Since $F_i(\xi_i)\ge0$, $\beta_i(\rho_i-\rho_{i,\infty})\ge0$, and $z_i(\xi_i)\xi_i=|z_i(\xi_i)||\xi_i|\le |z_i(\xi_i)|$, we obtain
\begin{align}
&\sum_{i=1}^m
\beta_i(\rho_i-\rho_{i,\infty})
\left[
z_i(\xi_i)\xi_i-F_i(\xi_i)
\right]
\nonumber\\
&\le
\sum_{i=1}^m
\beta_i(\rho_i-\rho_{i,\infty})
z_i(\xi_i)\xi_i
\nonumber\\
&\le
\sum_{i=1}^m
a_i^\Lambda |z_i(\xi_i)|
\le
\|a^\Lambda\|\,\|z(\xi)\|.
\label{eq:a bound proof}
\end{align}
Using the Cauchy--Schwarz inequality, the identity $z^TGz=z^TG_sz$, and the positive definiteness of $G_s$ on $S$, we obtain
\begin{align}
\dot V
\le&
-\lambda_{\min}\big(G_s(\eta,\dot\eta)\big)\|z\|^2
\nonumber\\
&+
\|z\|
\left(
\|H(t,\eta,\dot\eta)\|+\|a^\Lambda\|
\right).
\label{eq:Vdot bound proof}
\end{align}

The faces $\rho_i=\rho_{i,\infty}$ cannot be crossed outward because \eqref{eq:rho dynamics proof} gives $\dot\rho_i=\ell_i|\Delta\tau_i|\ge0$ there. Hence, at a first exit point from $\mathcal L_\Lambda$, it holds that $V=c_\Lambda$; moreover, \eqref{eq:rho bound proof} holds there by continuity. By \eqref{eq:F inequality proof} and the Cauchy--Schwarz inequality,
\begin{align}
c_\Lambda
&=
\sum_{i=1}^m\rho_iF_i(\xi_i)
\le
\sum_{i=1}^m\rho_i|z_i(\xi_i)|
\nonumber\\
&\le
\|\rho\|\,\|z(\xi)\|
\le
\|\bar\rho^\Lambda\|\,\|z(\xi)\|.
\label{eq:z lower sublevel proof}
\end{align}
It follows from \eqref{eq:c Lambda proof} and \eqref{eq:C Lambda theorem} that
\begin{equation}
\|z(\xi)\|
\ge
\frac{c_\Lambda}{\|\bar\rho^\Lambda\|}
>
D^\Lambda.
\label{eq:z D lower proof}
\end{equation}
Invoking \eqref{eq:D Lambda theorem} in \eqref{eq:Vdot bound proof} therefore gives $\dot V<0$ whenever a trajectory reaches $V=c_\Lambda$ from the interior of $\mathcal L_\Lambda$. This contradicts a first outward crossing of the sublevel boundary. Hence, $\mathcal L_\Lambda$ is positively invariant, $t_s=t_{\max}$, and
\begin{equation}
(\rho(t),\xi(t))\in\mathcal L_\Lambda,
\qquad
\forall t\in[0,t_{\max}).
\label{eq:L invariant proof}
\end{equation}
Consequently, $|\xi_i(t)|<\theta_i^\Lambda<1$ for all $t\in[0,t_{\max})$ and $i=1,\ldots,m$. It follows from \eqref{eq:xi scalar definition} that $|s_i(t)|<\rho_i(t)$ for all $t\in[0,t_{\max})$ and $i=1,\ldots,m$.
Moreover, \eqref{eq:L invariant proof} and \eqref{eq:theta Lambda proof} imply boundedness of $\mathbf T(\xi)$, and hence boundedness of $\tau$, $\operatorname{sat}(\tau)$, and $\Delta\tau$. The bounds \eqref{eq:rho bound proof} then hold on $[0,t_{\max})$, so $\rho_i(t)$ is bounded for every $i=1,\ldots,m$. Hence, $s_i=\rho_i\xi_i$ is bounded. Since $s_i=\dot e_i+\alpha_i e_i$, both $e_i$ and $\dot e_i$ are bounded. Therefore, $\eta$ and $\dot\eta$ are bounded. Thus, all closed loop signals remain bounded on $[0,t_{\max})$.

Since $|\xi_i(t)|<\theta_i^\Lambda<1$ and the solution of \eqref{eq:rho dynamics proof} satisfies
\begin{align*}
\rho_i(t)
=&\ \rho_{i,\infty}
+(\rho_i(0)-\rho_{i,\infty})e^{-\beta_i t}
\\
&+\ell_i\int_0^t e^{-\beta_i(t-r)}|\Delta\tau_i(r)|dr
\ge \rho_{i,\infty}>0,
\end{align*}
the solution remains in a compact subset of $\Omega$. Hence, it cannot escape the domain of definition in finite time. Consequently, $t_{\max}=\infty$.
This proves \eqref{eq:xi theorem result}--\eqref{eq:s theorem result}.

Finally, solving \eqref{eq:rho dynamics proof} gives
\begin{align}
\rho_i(t)
=
&
\rho_{i,\infty}
+
\left(\rho_i(0)-\rho_{i,\infty}\right)e^{-\beta_i t}
\nonumber\\
&+
\ell_i
\int_0^t
e^{-\beta_i(t-r)}
|\Delta\tau_i(r)|dr.
\label{eq:rho solution proof}
\end{align}
Thus,
\begin{align}
\rho_i(t)
\le&
\rho_{i,\infty}
+
\left(\rho_i(0)-\rho_{i,\infty}\right)e^{-\beta_i t}
\nonumber\\
&+
\frac{\ell_i}{\beta_i}
\sup_{r\ge0}|\Delta\tau_i(r)|
\left(1-e^{-\beta_i t}\right),
\label{eq:rho final proof}
\end{align}
which proves \eqref{eq:rho theorem result}.

Since $s_i=\dot e_i+\alpha_i e_i$, we have $e_i(t)=e_i(0)e^{-\alpha_i t}+\int_0^t e^{-\alpha_i(t-r)}s_i(r)dr$.
Using $|s_i(t)|<\rho_i(t)$ gives \eqref{eq:e theorem result}. Taking the limit superior in \eqref{eq:e theorem result} and using \eqref{eq:rho theorem result} gives \eqref{eq:e limsup theorem result}. The proof is complete.
\end{proof}

\begin{remark}
The theorem suggests a practical tuning procedure. First, $\rho_i(0)$ must be selected according to \eqref{eq:rho initial condition} to ensure that the normalized errors are initially well posed. One may then request fast recovery and tight nominal accuracy by choosing a large $\beta_i$ and a small $\rho_{i,\infty}$, respectively. If these specifications exceed the available actuation, the saturation-mismatch mechanism automatically relaxes the corresponding performance function toward achievable values while preserving the closed-loop guarantees.
\end{remark}

\section{Simulation and Experimental Validation}\label{section:SIMULATION}

This section evaluates the proposed controller in two stages. First, a comparative BlueROV2 simulation contrasts APPC with conventional PPC under identical trajectories and input limits; subsequently, a real-time pool experiment demonstrates the practical tracking performance of the adaptive design.

\subsection{Comparative Simulation}
The proposed adaptive PPC (APPC) and the corresponding PPC formulation implemented without envelope adaptation were compared using the same BlueROV2 model, reference, and input limits. The simulation setup consists of ROS~2 Humble, Gazebo Sim, ArduSub software in the loop, MAVROS, and \texttt{ros2\_control} with the six-thruster allocation matrix. The vehicle was spawned at $(x,y,z)=(0,0,-1)$~m. The controlled coordinates were $[x,y,-z,\phi,\psi]^T$, while pitch remained unactuated. Body-frame velocities were transformed through the vehicle kinematic Jacobian, and the filtered errors were mapped through its reduced inverse to the actuator coordinates $[X,Y,Z,K,N]^T$. The controller ran with $|\tau_i|\leq4$ in all five normalized actuator coordinates.

The desired reference is defined by $x_d=1.9+1.6\sin(\omega t)$, $y_d=-0.8+0.65\sin(2\omega t)$, $-z_d=2.5+0.10\sin(\omega t)$, $\phi_d=-20^{\circ}\sin(\omega t)$, and $\psi_d=\operatorname{atan2}(\dot y_d,\dot x_d)$, for $t\in[0,41.89)$~s, with $\omega=0.30~\mathrm{rad/s}$. For the controlled-coordinate ordering $[x,y,-z,\phi,\psi]$, $(\alpha_1,\ldots,\alpha_5)=(1,1,1,1,1)$. For the actuator-coordinate ordering $[X,Y,Z,K,N]$, the common parameters were $K=\operatorname{diag}(15,15,20,4,6)$, $(\beta_1,\ldots,\beta_5)=(0.1,0.1,0.2,0.1,0.2)$, $(\rho_1(0),\ldots,\rho_5(0))=(5.5,3.5,1,2,3.14)$, and $(\rho_{1,\infty},\ldots,\rho_{5,\infty})=(0.2,0.2,0.1,0.2,0.1)$. For APPC, $(\ell_1,\ldots,\ell_5)=(0.1,0.1,0.267,0.667,0.5)$, whereas for PPC the saturation-mismatch term was removed by setting $\ell_i=0$ for all $i$.

\begin{figure*}[ttpbh]
\centering
\begin{minipage}[t]{0.49\textwidth}
\centering
\includegraphics[width=\linewidth]{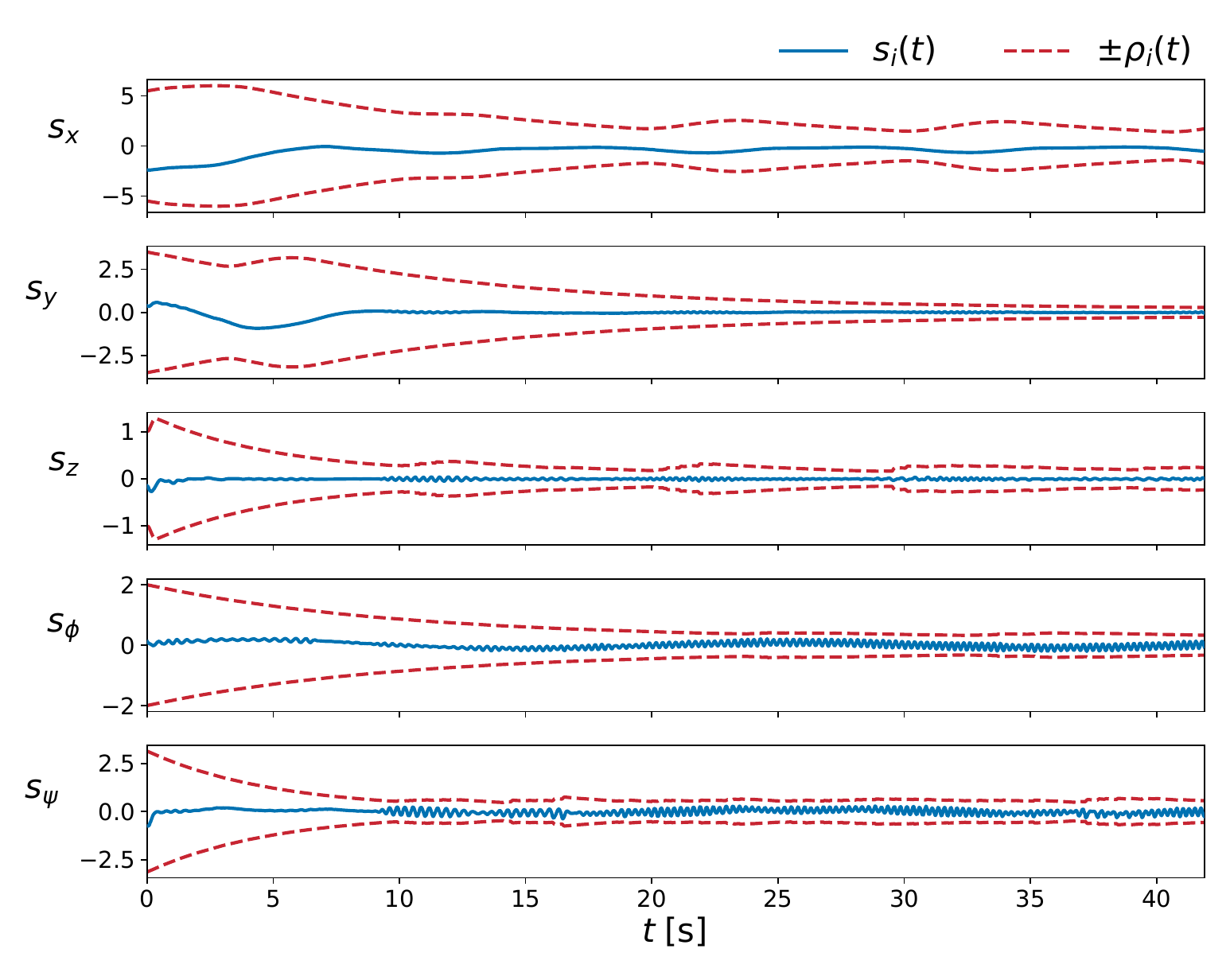}\\[-1mm]
\small (a)
\end{minipage}\hfill
\begin{minipage}[t]{0.49\textwidth}
\centering
\includegraphics[width=\linewidth]{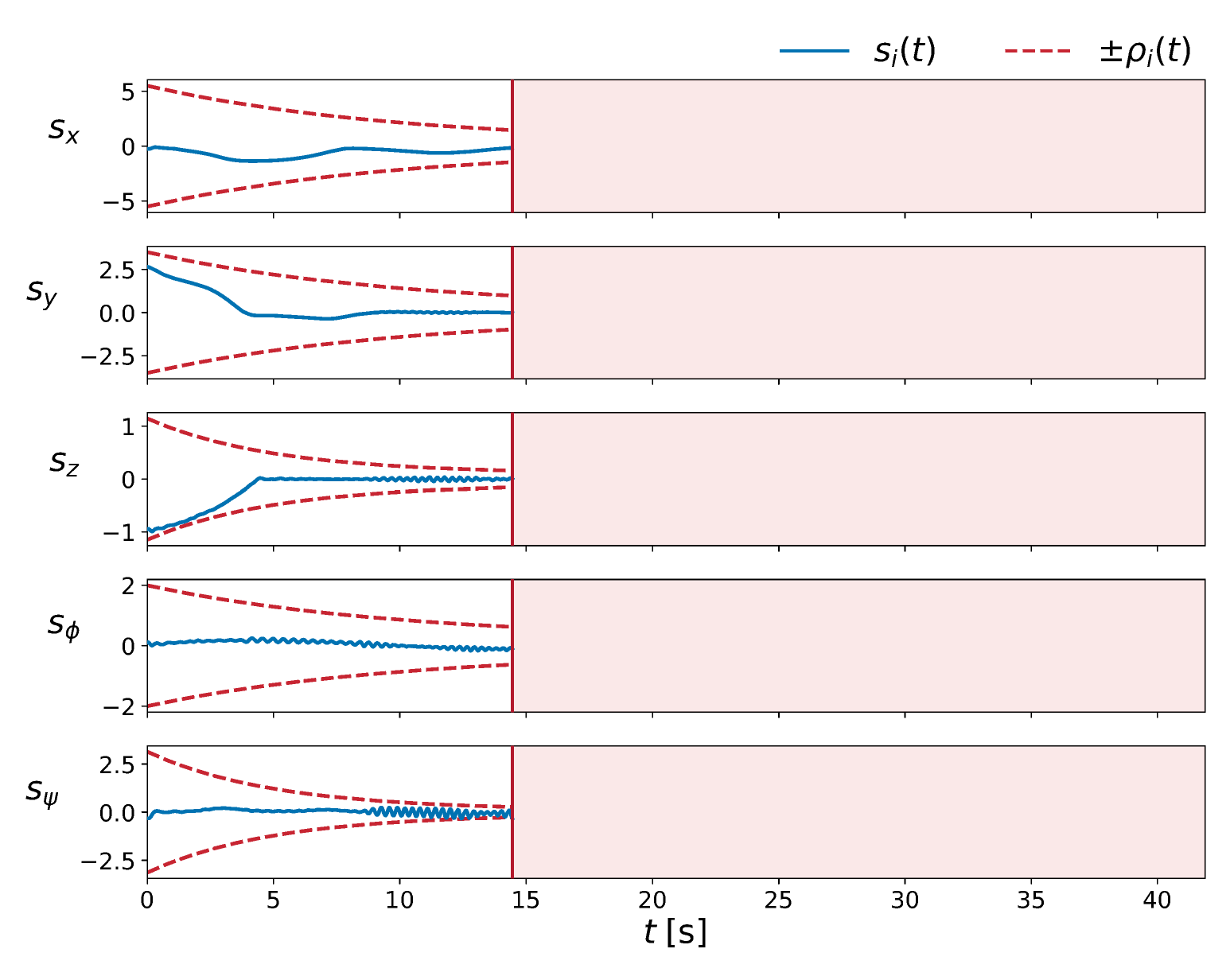}\\[-1mm]
\small (b)
\end{minipage}

\vspace{1mm}
\begin{minipage}[t]{0.49\textwidth}
\centering
\includegraphics[width=\linewidth]{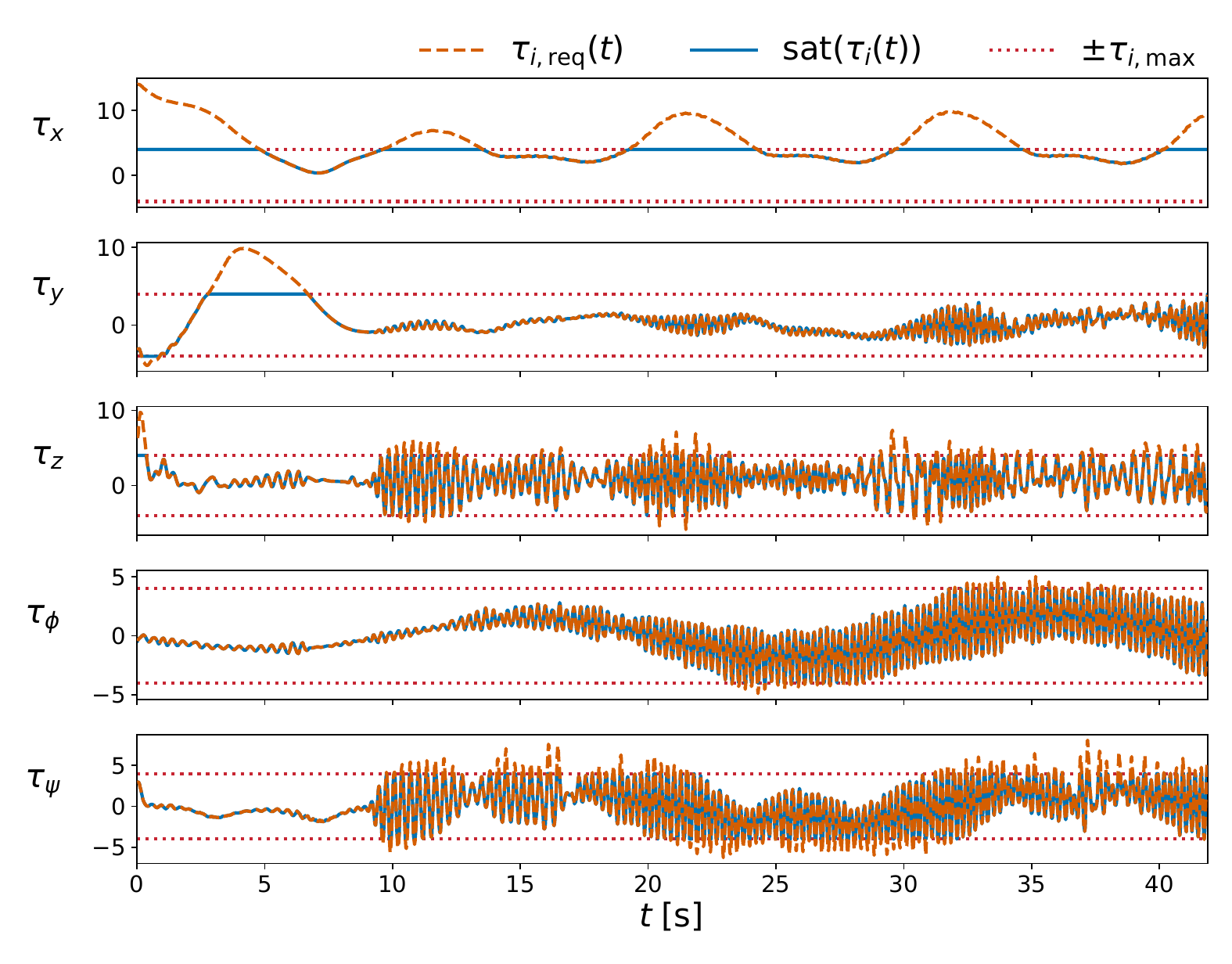}\\[-1mm]
\small (c)
\end{minipage}\hfill
\begin{minipage}[t]{0.49\textwidth}
\centering
\includegraphics[width=\linewidth]{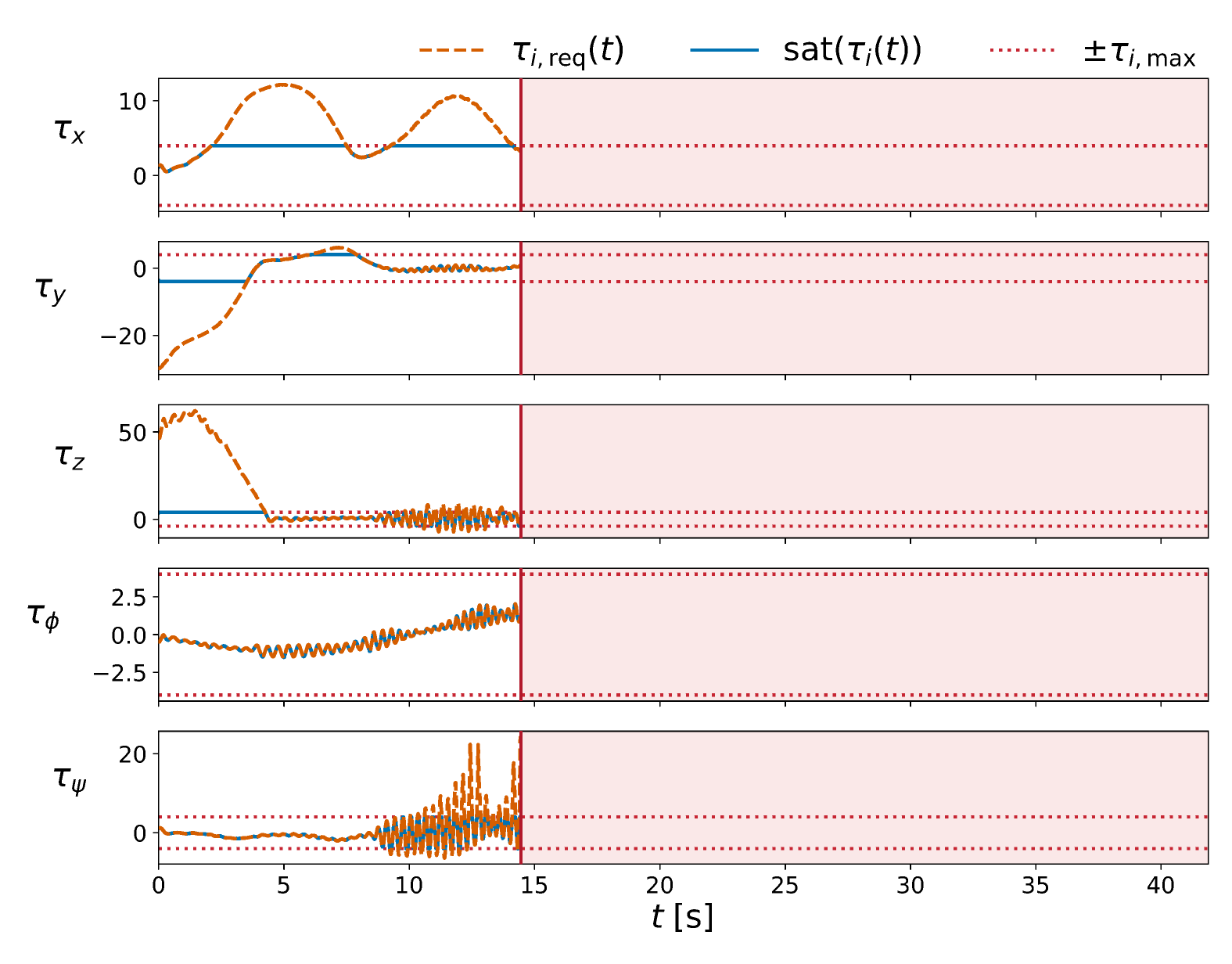}\\[-1mm]
\small (d)
\end{minipage}
\caption{Comparative BlueROV2 simulation under the common input limits $|\tau_i|\le4$. (a) APPC filtered errors and adaptive performance functions. (b) PPC filtered errors and fixed performance functions. (c) APPC requested and applied inputs. (d) PPC requested and applied inputs. APPC maintains prescribed performance, whereas PPC violates the yaw bound at $t=14.459$~s and the simulation is subsequently stopped.}
\label{fig:simulation_results}
\vspace{-2mm}
\end{figure*}

Adaptive PPC maintained $|s_i(t)|<\rho_i(t)$ and stable tracking throughout both periods despite sustained input saturation. Under the identical reference and input limits, the PPC without adaptation could not accommodate the loss of control authority, violated the yaw performance bound at $t=14.459$~s, and became unstable. Fig.~\ref{fig:simulation_results} compares the filtered errors, performance functions, and constrained inputs for both controllers; synchronized vehicle motion is shown in the ancillary file \texttt{SupplementaryVideo.mp4}.

\subsection{Experimental Validation}
Real-time validation was conducted with a tethered BlueROV2 in an indoor test tank. The controller ran under ROS and MAVROS, and the controlled variables were $[x,y,\psi]^T$. The planar Jacobian transformed the inertial-frame tracking errors to the body-frame filtered errors $[s_x^b,s_y^b,s_\psi]^T$, and the controller outputs were applied as PWM commands in surge, sway, and yaw. A synchronized external camera recorded the experiment.

State feedback was supplied by a ROS sensor-fusion pipeline. A calibrated vehicle-mounted camera observed a pre-mapped array of ArUco fiducials on the pool floor. The \texttt{aruco\_detect} package, based on the ArUco marker framework \cite{garrido2014automatic}, extracted the marker detections, while \texttt{fiducial\_slam} \cite{ubiquityrobotics_fiducials} combined them with the fixed map to estimate the absolute planar pose $(x,y,\psi)$. A two-dimensional EKF from the \texttt{robot\_localization} package \cite{moore2014generalized} fused this pose with planar optical-flow velocities and MAVROS IMU angular rates and linear accelerations. The resulting fused pose and twist were supplied to the controller. The EKF was solely an experimental measurement-fusion layer and was not part of the proposed control law.

The desired reference is defined by $x_d=1.9+1.6\sin(\omega t)$, $y_d=-0.8+0.65\sin(2\omega t)$, and $\psi_d=\operatorname{atan2}(\dot y_d,\dot x_d)$, with $\omega=0.03~\mathrm{rad/s}$. The filtered-error gains were $\alpha_x=\alpha_y=\alpha_\psi=1$. For $i=1,2,3$ corresponding to the surge, sway, and yaw control channels, respectively, the remaining parameters were $k_i=0.7$, $\beta_i=0.7$, $\rho_{i,\infty}=0.1$, $\ell_i=1$, and $\tau_{i,\max}=4$, with $(\rho_1(0),\rho_2(0),\rho_3(0))=(5.5,3.5,3.14)$.

\begin{figure}[tpbh]
\centering
\includegraphics[width=0.88\columnwidth]{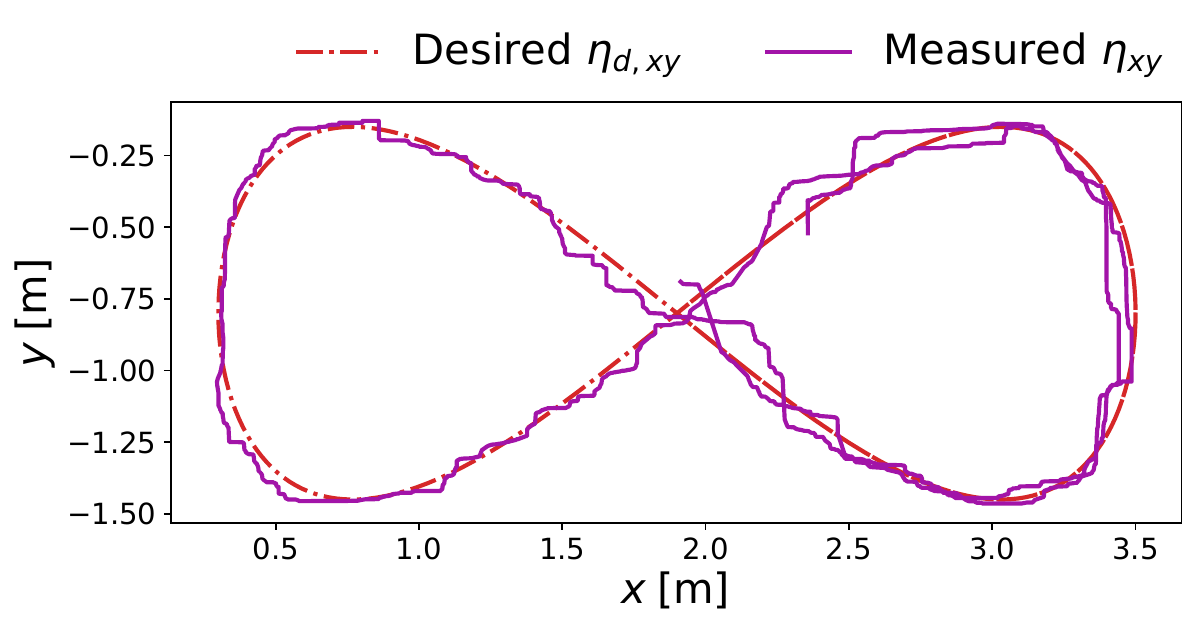}
\caption{Experimental planar trajectory tracking over the reported interval.}
\label{fig:experimental_trajectory}
\vspace{-2mm}
\end{figure}

Throughout the reported interval of $313.38$~s, the prescribed performance functions adapted to the saturation mismatch by enlarging the corresponding funnels whenever necessary. This relaxation preserved closed-loop stability and trajectory tracking in agreement with Theorem~\ref{Theorem_1}. Figs.~\ref{fig:bluerov2_platform}, \ref{fig:experimental_trajectory}, and \ref{fig:experimental_signals} present the experimental platform, planar trajectory, channelwise filtered errors, adaptive performance functions, and constrained inputs; synchronized vehicle motion through a complete measured figure-eight traversal is shown in the same supplementary video.

\begin{figure*}[ttpbh]
\centering
\begin{minipage}[t]{0.32\textwidth}
\centering
\includegraphics[width=0.90\linewidth]{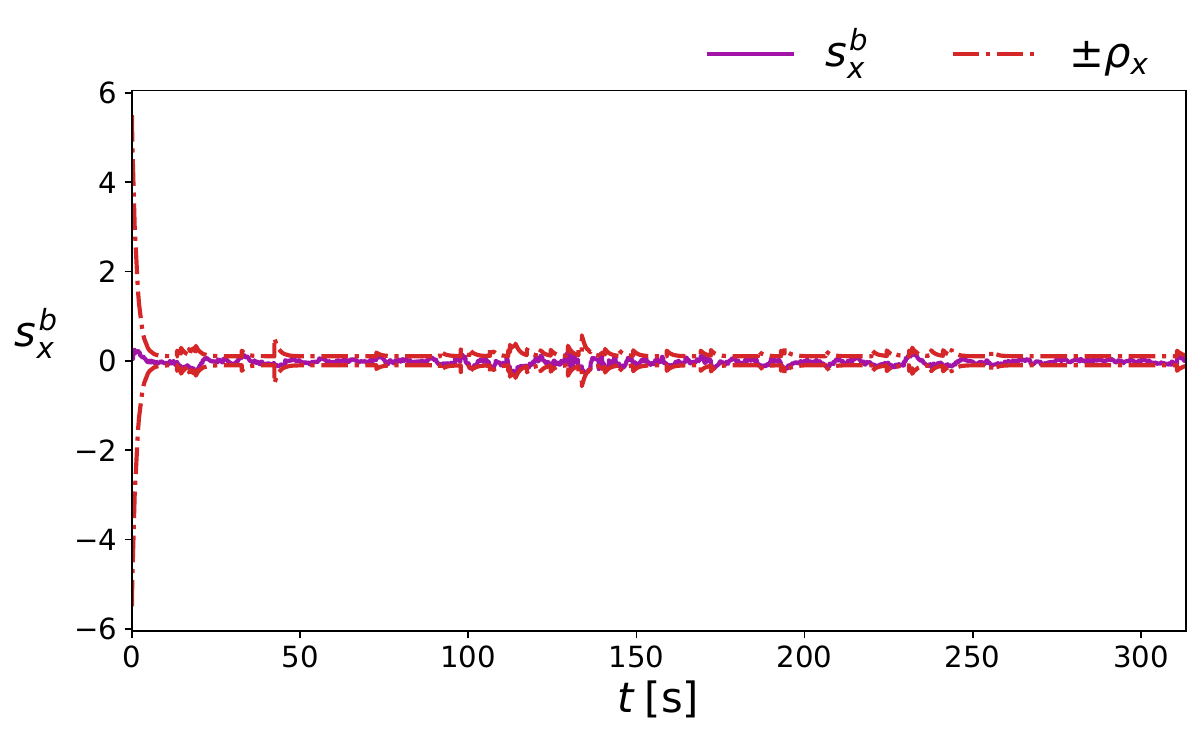}\\[-1mm]
\small (a)
\end{minipage}\hfill
\begin{minipage}[t]{0.32\textwidth}
\centering
\includegraphics[width=0.90\linewidth]{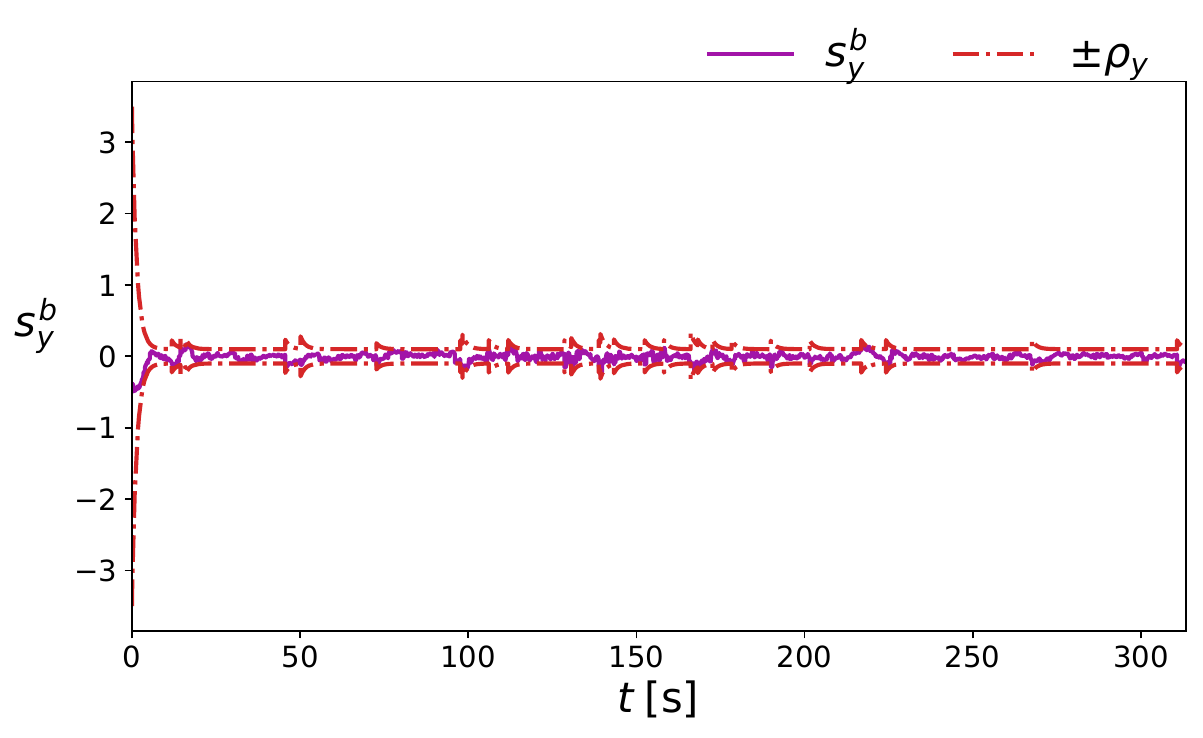}\\[-1mm]
\small (b)
\end{minipage}\hfill
\begin{minipage}[t]{0.32\textwidth}
\centering
\includegraphics[width=0.90\linewidth]{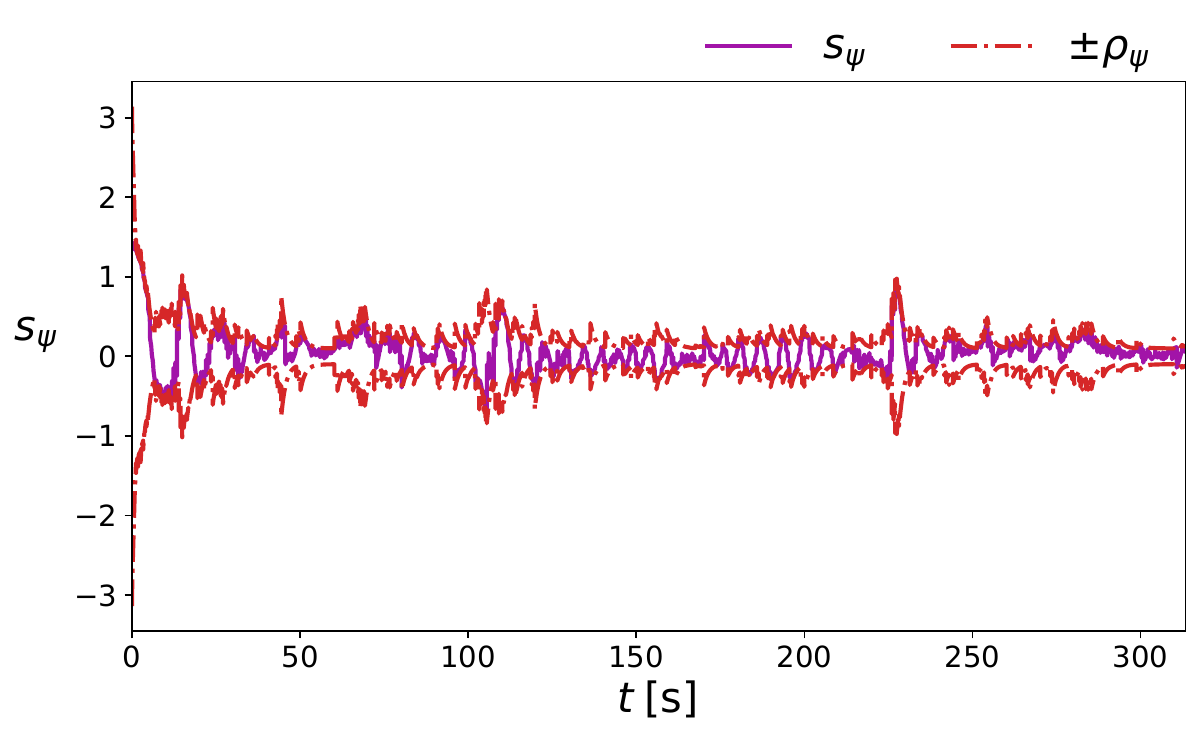}\\[-1mm]
\small (c)
\end{minipage}

\vspace{1mm}
\begin{minipage}[t]{0.32\textwidth}
\centering
\includegraphics[width=0.90\linewidth]{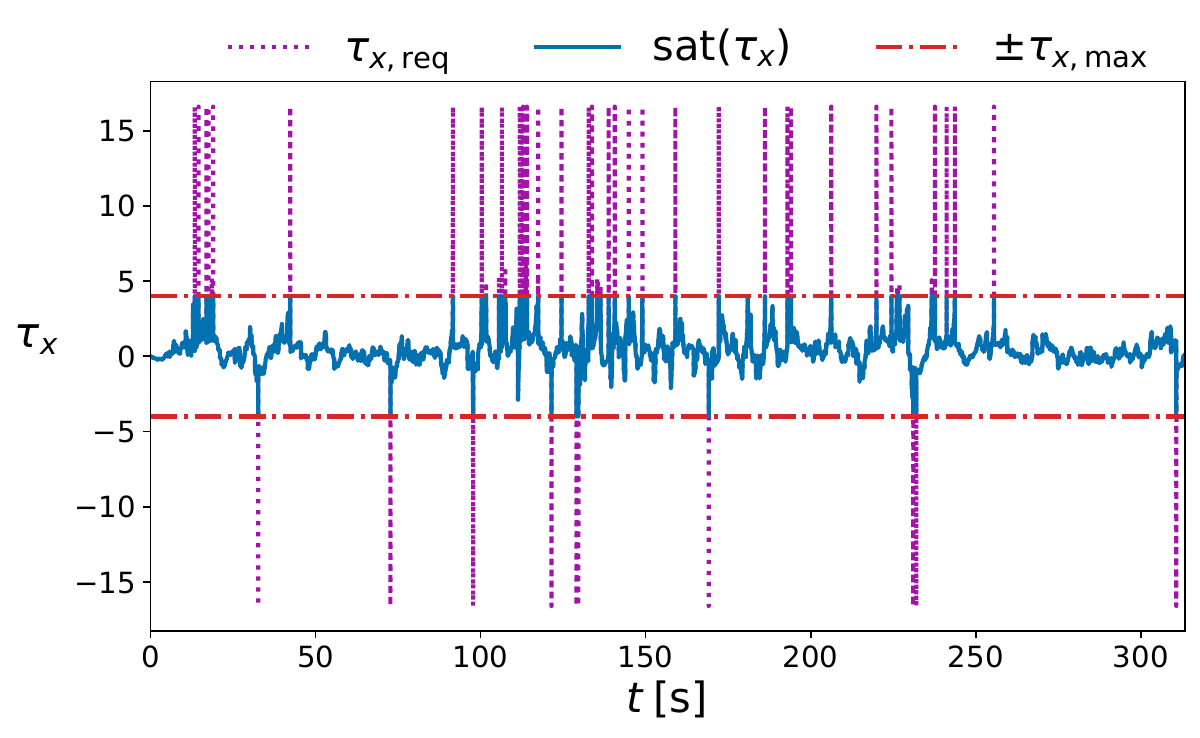}\\[-1mm]
\small (d)
\end{minipage}\hfill
\begin{minipage}[t]{0.32\textwidth}
\centering
\includegraphics[width=0.90\linewidth]{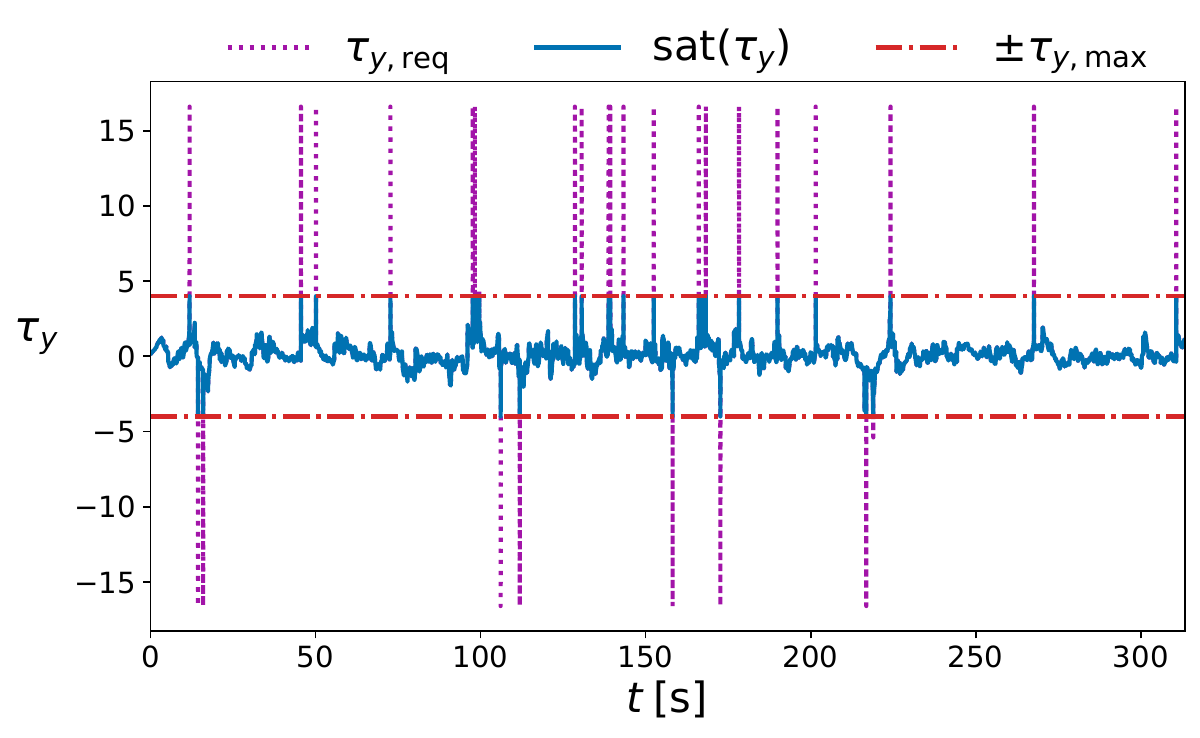}\\[-1mm]
\small (e)
\end{minipage}\hfill
\begin{minipage}[t]{0.32\textwidth}
\centering
\includegraphics[width=0.90\linewidth]{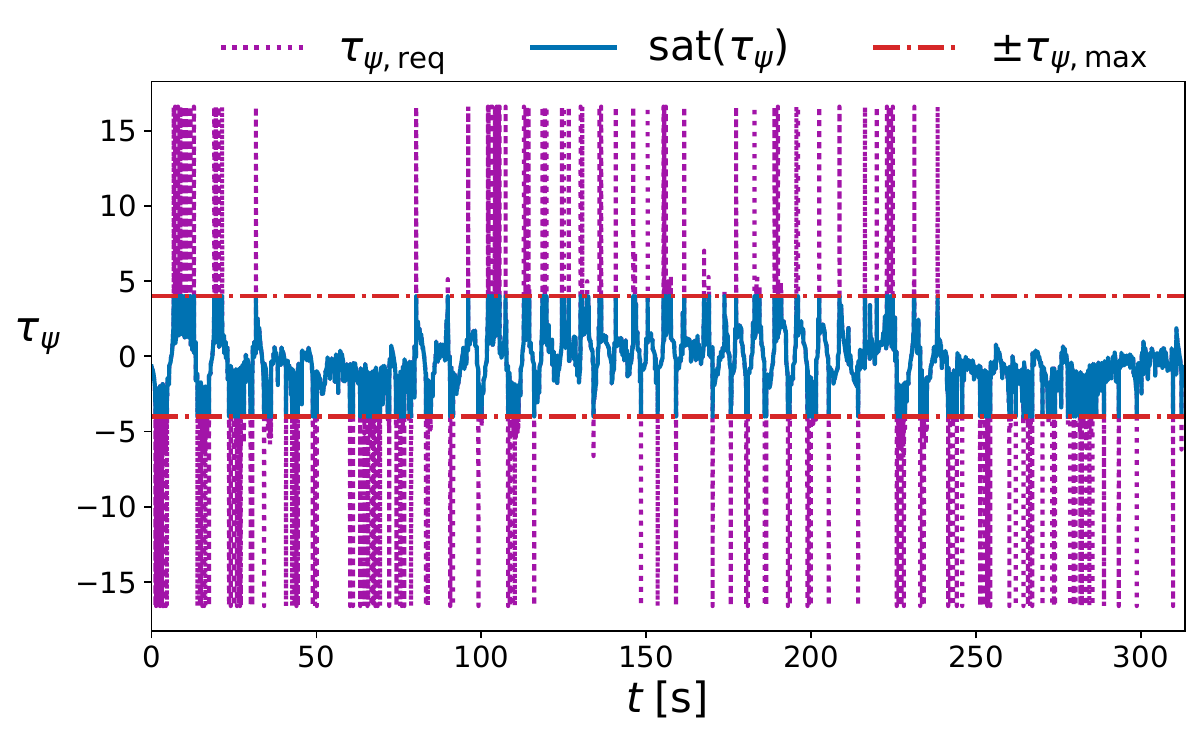}\\[-1mm]
\small (f)
\end{minipage}
\caption{Experimental closed-loop signals over the complete reported interval. (a)--(c) Filtered errors and adaptive performance functions for surge, sway, and yaw, respectively. (d)--(f) Corresponding requested and applied input commands under $|\tau_i|\le4$.}
\label{fig:experimental_signals}
\vspace{-2mm}
\end{figure*}

\section{Conclusion}\label{sec:conclusion}
This paper presented a low complexity APPC scheme for uncertain AUV dynamics with coupled saturated inputs. The analysis established boundedness and prescribed tracking performance, while the comparative simulation and BlueROV2 experiment demonstrated stable trajectory tracking under input saturation without modifying the reference.

\bibliographystyle{IEEEtran} 
\bibliography{IEEEabrv,references}

\begin{thebibliography}{10}
\providecommand{\url}[1]{#1}
\csname url@rmstyle\endcsname
\providecommand{\newblock}{\relax}
\providecommand{\bibinfo}[2]{#2}
\providecommand\BIBentrySTDinterwordspacing{\spaceskip=0pt\relax}
\providecommand\BIBentryALTinterwordstretchfactor{4}
\providecommand\BIBentryALTinterwordspacing{\spaceskip=\fontdimen2\font plus
\BIBentryALTinterwordstretchfactor\fontdimen3\font minus
  \fontdimen4\font\relax}
\providecommand\BIBforeignlanguage[2]{{%
\expandafter\ifx\csname l@#1\endcsname\relax
\typeout{** WARNING: IEEEtran.bst: No hyphenation pattern has been}%
\typeout{** loaded for the language `#1'. Using the pattern for}%
\typeout{** the default language instead.}%
\else
\language=\csname l@#1\endcsname
\fi
#2}}

\bibitem{BechlioulisKarras2017}
C.~P. Bechlioulis, G.~C. Karras, S.~Heshmati-Alamdari, and K.~J. Kyriakopoulos,
  ``Trajectory tracking with prescribed performance for underactuated
  underwater vehicles under model uncertainties and external disturbances,''
  \emph{IEEE Transactions on Control Systems Technology}, vol.~25, no.~2, pp.
  429--440, 2017.

\bibitem{lapandic2024kinodynamic}
D.~Lapandic, C.~K. Verginis, D.~V. Dimarogonas, and B.~Wahlberg, ``Kinodynamic
  motion planning via funnel control for underactuated unmanned surface
  vehicles,'' 2024, arXiv:2308.00130.

\bibitem{4639441}
C.~P. Bechlioulis and G.~A. Rovithakis, ``Robust adaptive control of feedback
  linearizable {MIMO} nonlinear systems with prescribed performance,''
  \emph{IEEE Transactions on Automatic Control}, vol.~53, no.~9, pp.
  2090--2099, 2008.

\bibitem{ilchmann2002tracking}
A.~Ilchmann, E.~P. Ryan, and C.~J. Sangwin, ``{Tracking with prescribed
  transient behaviour},'' \emph{ESAIM: Control, Optimisation and Calculus of
  Variations}, vol.~7, pp. 471--493, 2002.

\bibitem{karason1993adaptive}
S.~P. Karason and A.~M. Annaswamy, ``Adaptive control in the presence of input
  constraints,'' in \emph{1993 American Control Conference}.\hskip 1em plus
  0.5em minus 0.4em\relax IEEE, 1993, pp. 1370--1374.

\bibitem{CHEN2011452}
M.~Chen, S.~S. Ge, and B.~Ren, ``Adaptive tracking control of uncertain {MIMO}
  nonlinear systems with input constraints,'' \emph{Automatica}, vol.~47,
  no.~3, pp. 452--465, 2011.

\bibitem{chen2020robust}
C.~Chen, G.~Zhu, Q.~Zhang, and J.~Zhang, ``Robust adaptive finite-time tracking
  control for uncertain {Euler-Lagrange} systems with input saturation,''
  \emph{IEEE Access}, vol.~8, pp. 187\,605--187\,614, 2020.

\bibitem{wang2021neuroadaptive}
C.~Wang and T.~Kuang, ``Neuroadaptive control for uncertain {Euler-Lagrange}
  systems with input and output constraints,'' \emph{IEEE Access}, vol.~9, pp.
  51\,940--51\,949, 2021.

\bibitem{shao2021adaptive}
K.~Shao, R.~Tang, F.~Xu, X.~Wang, and J.~Zheng, ``Adaptive sliding mode control
  for uncertain {Euler--Lagrange} systems with input saturation,''
  \emph{Journal of the Franklin Institute}, vol. 358, no.~16, pp. 8356--8376,
  2021.

\bibitem{ilchmann2004input}
A.~Ilchmann and S.~Trenn, ``{Input constrained funnel control with applications
  to chemical reactor models},'' \emph{Systems \& Control Letters}, vol.~53,
  no.~5, pp. 361--375, 2004.

\bibitem{hu2022funnel}
J.~Hu, S.~Trenn, and X.~Zhu, ``Funnel control for relative degree one nonlinear
  systems with input saturation,'' in \emph{2022 European Control Conference
  (ECC)}.\hskip 1em plus 0.5em minus 0.4em\relax IEEE, 2022, pp. 227--232.

\bibitem{hopfe2010funnel}
N.~Hopfe, A.~Ilchmann, and E.~P. Ryan, ``{Funnel control with saturation:
  Linear MIMO systems},'' \emph{IEEE Transactions on Automatic Control},
  vol.~55, no.~2, pp. 532--538, 2010.

\bibitem{liberzon2010bang}
D.~Liberzon and S.~Trenn, ``{The bang-bang funnel controller},'' in \emph{49th
  IEEE Conference on Decision and Control (CDC)}.\hskip 1em plus 0.5em minus
  0.4em\relax IEEE, 2010, pp. 690--695.

\bibitem{liberzon2013bang}
------, ``{The bang-bang funnel controller for uncertain nonlinear systems with
  arbitrary relative degree},'' \emph{IEEE Transactions on Automatic Control},
  vol.~58, no.~12, pp. 3126--3141, 2013.

\bibitem{wang2019improved}
Y.~Wang, J.~Hu, J.~Li, and B.~Liu, ``Improved prescribed performance control
  for nonaffine pure-feedback systems with input saturation,''
  \emph{International Journal of Robust and Nonlinear Control}, vol.~29, no.~6,
  pp. 1769--1788, 2019.

\bibitem{ji2022saturation}
R.~Ji, D.~Li, J.~Ma, and S.~S. Ge, ``Saturation-tolerant prescribed control of
  {MIMO} systems with unknown control directions,'' \emph{IEEE Transactions on
  Fuzzy Systems}, vol.~30, no.~12, pp. 5116--5127, 2022.

\bibitem{ji2022saturation2}
R.~Ji, D.~Li, and S.~S. Ge, ``Saturation-tolerant prescribed control for
  nonlinear time-delay systems,'' \emph{IEEE Transactions on Fuzzy Systems},
  vol.~31, no.~8, pp. 2495--2508, 2022.

\bibitem{10004950}
P.~K. Mishra and P.~Jagtap, ``Approximation-free prescribed performance control
  with prescribed input constraints,'' \emph{IEEE Control Systems Letters},
  vol.~7, pp. 1261--1266, 2023.

\bibitem{YONG2020109268}
K.~Yong, M.~Chen, Y.~Shi, and Q.~Wu, ``Flexible performance-based robust
  control for a class of nonlinear systems with input saturation,''
  \emph{Automatica}, vol. 122, p. 109268, 2020.

\bibitem{trakas2023robust}
P.~S. Trakas and C.~P. Bechlioulis, ``Robust adaptive prescribed performance
  control for unknown nonlinear systems with input amplitude and rate
  constraints,'' \emph{IEEE Control Systems Letters}, vol.~7, pp. 1801--1806,
  2023.

\bibitem{10693631}
------, ``Adaptive performance control for input constrained {MIMO} nonlinear
  systems,'' \emph{IEEE Transactions on Systems, Man, and Cybernetics:
  Systems}, vol.~54, no.~12, pp. 7733--7745, 2024.

\bibitem{BIKAS2024111663}
L.~N. Bikas and G.~A. Rovithakis, ``Prescribed performance under input
  saturation for uncertain strict-feedback systems: A switching control
  approach,'' \emph{Automatica}, vol. 165, p. 111663, 2024.

\bibitem{li2018adaptive}
S.~Li and Z.~Xiang, ``Adaptive prescribed performance control for switched
  nonlinear systems with input saturation,'' \emph{International Journal of
  Systems Science}, vol.~49, no.~1, pp. 113--123, 2018.

\bibitem{cheng2019neural}
C.~Cheng, Y.~Zhang, and S.~Liu, ``Neural observer-based adaptive prescribed
  performance control for uncertain nonlinear systems with input saturation,''
  \emph{Neurocomputing}, vol. 370, pp. 94--103, 2019.

\bibitem{LiangYuLi2025}
H.~Liang, J.~Yu, and H.~Li, ``Adaptive trajectory tracking control for small
  unmanned underwater vehicles with prescribed performance and dynamic
  compensation,'' \emph{IEEE Transactions on Industrial Electronics}, vol.~72,
  no.~6, pp. 6297--6306, 2025.

\bibitem{berger2024input}
T.~Berger, ``Input-constrained funnel control of nonlinear systems,''
  \emph{IEEE Transactions on Automatic Control}, vol.~69, no.~8, pp.
  5368--5382, 2024.

\bibitem{11121659}
A.~K. Gkesoulis and C.~P. Bechlioulis, ``A low-complexity adaptive performance
  control scheme for unknown nonlinear systems subject to input saturation,''
  \emph{IEEE Control Systems Letters}, vol.~9, pp. 2115--2120, 2025.

\bibitem{FotiadisRovithakisTACON2024}
F.~Fotiadis and G.~A. Rovithakis, ``Input-constrained prescribed performance
  control for high-order {MIMO} uncertain nonlinear systems via reference
  modification,'' \emph{IEEE Transactions on Automatic Control}, vol.~69,
  no.~5, pp. 3301--3308, 2024.

\bibitem{11187072}
A.~K. Gkesoulis, P.~A. Georgakis, G.~C. Karras, and C.~P. Bechlioulis,
  ``Prescribed performance control for uncertain {Euler-Lagrange} systems with
  constrained inputs via virtual-only reference modification,'' in \emph{2025
  European Control Conference (ECC)}, 2025, pp. 404--409.

\bibitem{gkesoulis2026mimocoupled}
A.~K. Gkesoulis, H.~E. Psillakis, G.~C. Karras, and C.~P. Bechlioulis,
  ``Prescribed performance control of uncertain {MIMO} nonlinear systems with
  coupled and constrained inputs,'' \emph{IEEE Control Systems Letters},
  vol.~10, pp. 97--102, 2026.

\bibitem{garrido2014automatic}
S.~Garrido-Jurado, R.~Mu{\~n}oz-Salinas, F.~J. Madrid-Cuevas, and M.~J.
  Mar{\'i}n-Jim{\'e}nez, ``{Automatic Generation and Detection of Highly
  Reliable Fiducial Markers Under Occlusion},'' \emph{Pattern Recognition},
  vol.~47, no.~6, pp. 2280--2292, 2014.

\bibitem{ubiquityrobotics_fiducials}
{Ubiquity Robotics}, ``{fiducials}: {ROS} packages for fiducial-based
  localization,'' \url{https://github.com/UbiquityRobotics/fiducials}, 2022,
  accessed: September 6, 2026.

\bibitem{moore2014generalized}
T.~Moore and D.~Stouch, ``{A Generalized Extended Kalman Filter Implementation
  for the Robot Operating System},'' in \emph{Proc. 13th Int. Conf. Intelligent
  Autonomous Systems (IAS-13)}, 2014, pp. 335--348.

\end{thebibliography}

\end{document}